\DeclareMathOperator*{\argmax}{arg\,max}
\newcommand{\cmark}{\ding{51}}%
\newcommand{\xmark}{\ding{55}}%
\begin{document}
\title{SubZeroCore: A Submodular Approach with Zero Training for Coreset Selection} 
\titlerunning{SubZeroCore: A Submodular Approach for Coreset Selection}
\author{Brian B. Moser\inst{1, 2} \and
Tobias C. Nauen\inst{1, 2}\and
Arundhati S. Shanbhag\inst{1, 2} \and
Federico Raue\inst{1}\and
Stanislav Frolov\inst{1}\and
Joachim Folz\inst{1}\and
Andreas Dengel\inst{1,2}
}

\authorrunning{Moser et al.}
\institute{
German Research Center for Artificial Intelligence (DFKI), Germany \and
RPTU Kaiserslautern-Landau, Germany \and
Equal Contribution\\
\email{first.second@dfki.de}}

\maketitle

\begin{abstract}
The goal of coreset selection is to identify representative subsets of datasets for efficient model training.
Yet, existing approaches paradoxically require expensive training-based signals, e.g., gradients, decision boundary estimates or forgetting counts, computed over the entire dataset prior to pruning, which undermines their very purpose by requiring training on samples they aim to avoid.
We introduce SubZeroCore, a novel, training-free coreset selection method that integrates submodular coverage and density into a single, unified objective. 
To achieve this, we introduce a sampling strategy based on a closed-form solution to optimally balance these objectives, guided by a single hyperparameter that explicitly controls the desired coverage for local density measures. 
Despite no training, extensive evaluations show that SubZeroCore matches training-based baselines and significantly outperforms them at high pruning rates, while dramatically reducing computational overhead. 
SubZeroCore also demonstrates superior robustness to label noise, highlighting its practical effectiveness and scalability for real-world scenarios.
\end{abstract}

\section{Introduction}
Deep learning breakthroughs often stem from training ever-larger models on ever-larger datasets, a trend that is both resource-heavy and environmentally costly \cite{wang2018dataset, csiba2018importance, zheng2022coverage, katharopoulos2018not}. 
In many applications, however, collecting or storing vast amounts of data poses significant challenges \cite{ganguli2022predictability, yang2024data}. 
Coreset selection seeks to address these problems by identifying a subset that contains a sufficient yet representative data summary of the original dataset \cite{moser2025coreset, sorscher2022beyond, guo2022deepcore}. 
In principle, such a coreset, once found, allows one to train models more efficiently on a fraction of the data without sacrificing much training quality \cite{katharopoulos2018not, bhalerao2024fine}.
Sometimes, they even lead to better training performance by mitigating the risk of injecting poisoned data into training, i.e., data with noisy annotations or outliers \cite{katharopoulos2018not, bengio2019extreme, marion2023less, ren2018learning}.
Examples of such positive effects can be found in various deep learning fields like neural architecture search \cite{na2021accelerating, moser2022less, yao2023asp}, image enhancement \cite{moser2024study, ding2023not, laribi2024application}, dataset distillation \cite{moser2024distill, chenunified, khandel2024distillation}, imbalanced datasets \cite{sivasubramanian2024gradient, luo2024dual}, continual learning \cite{nguyen2017variational, borsos2020coresets, yoon2021online}, and even quantum machine learning \cite{qu2022performance, huang2024coreset, xue2023near}. 

An ideal coreset selection method must balance two competing goals: \textbf{coverage}, which measures how well a selected subset represents the overall diversity and distribution of the full dataset, and \textbf{density}, which identifies highly concentrated regions in the data space containing informative, but potentially redundant samples \cite{zheng2022coverage, sener2017active, koh2017understanding}. 
Despite recent progress, state-of-the-art methods often incur heavy computational overhead because they rely on training-based signals such as gradients \cite{paul2021deep, mirzasoleiman2020coresets, killamsetty2021grad}, forgetting scores \cite{toneva2018empirical, paul2021deep}, or decision boundary estimates \cite{ducoffe2018adversarial, margatina2021active}. 
While these signals can help to identify impactful samples, they require partial or complete model training and also subject to exhaustive hyperparameter search \cite{guo2022deepcore}.
Paradoxically, this means current coreset selection methods, intended to reduce training burdens, often require extensive training and evaluations themselves.

In this work, we propose \textbf{SubZeroCore}, a novel coreset selection method grounded in submodular optimization \cite{berczi2019facility} that requires \emph{zero model training}. 
Unlike existing gradient-based or loss-dynamic methods, SubZeroCore uniquely integrates both coverage and density into a single, submodular objective.
As such, SubZeroCore positions itself among geometry-based methods like k-center greedy but with an objective for optimizing density as well as coverage.
By leveraging a closed-form coverage estimate to compute a hyperparameter-efficient local density, our method systematically picks a suitable neighborhood size with no reliance on gradients or iterative training. 
The result is a coreset selection method that (i) avoids expensive model-specific signals, (ii) maintains high coverage but still focuses on dense regions, (iii) offers theoretical optimality guarantees through submodularity, and (iv) relies on a single, controllable hyperparameter.

Concretely, we demonstrate that our submodular objective 
captures both coverage and density to improve the quality of coreset. 
Our experiments on CIFAR-10 \cite{krizhevsky2009learning} as well as ImageNet-1K \cite{deng2009imagenet} show that SubZeroCore consistently performs comparable to training-based baselines for low pruning rates and outperforms them under high pruning rates, while being substantially faster than most training-based approaches. 
Moreover, as emphasizing dense regions naturally de-emphasizes outliers, SubZeroCore remains robust to label noise. 

Taken together, our findings frame SubZeroCore as a practical tool for scalable coreset selection.
We believe this approach offers a practical avenue for advancing coreset-based strategies in domains where data curation or resource constraints predominate \cite{lee2021acav100m, abbas2024effective}.

\section{Preliminaries} 
\subsection{Coreset Selection}
\label{sec:probdef}
We begin with a classical discriminative task, where the training dataset 
\(\mathcal{T} = \{(\mathbf{x}_i, y_i)\}_{i=1}^{N}\)
consists of \(N\) i.i.d.\ samples drawn from an underlying data distribution \(P\). Each input \(\mathbf{x}_i \in \mathcal{X}\) is paired with a ground-truth label \(y_i \in \mathcal{Y}\). 
\begin{definition}[Coreset Selection]
The goal of coreset selection is to derive a small subset \(\mathcal{S} \subset \mathcal{T}\) (\(\lvert \mathcal{S}\rvert \ll \lvert \mathcal{T}\rvert\)) such that training a model \(\theta^\mathcal{S}\) on \(\mathcal{S}\) yields generalization performance on par with \(\theta^\mathcal{T}\) trained on the entire dataset \(\mathcal{T}\):
\begin{equation}
  \mathcal{S}^* = \mathop{\arg\min}_{\substack{\mathcal{S} \subset \mathcal{T} :\;\frac{\lvert \mathcal{S}\rvert}{\lvert \mathcal{T}\rvert} \approx 1 - \alpha}} \;\;\mathbb{E}_{\mathbf{x},y \sim P}\bigl[\mathcal{L}\bigl(\mathbf{x},y;\,\theta^\mathcal{S}\bigr) - \mathcal{L}\bigl(\mathbf{x},y;\,\theta^\mathcal{T}\bigr)\bigr],
\end{equation}
where \(\alpha \in (0,1)\) is the pruning ratio (fraction of samples removed) and \(\mathcal{L}\) is a loss function.
\end{definition}

While this objective is conceptually straightforward, it can be difficult to realize in practice \cite{agarwal2005geometric,feldman2020core, bachem2017practical}. 
One must decide how best to measure ``importance'' or ``representativeness'' for each sample \(\mathbf{x}_i\), so that the selection algorithm can prioritize those samples that most benefit the training \cite{nogueira2018stability, song2022adaptive, xiao2025rethinking, swayamdipta2020dataset}. 
For the remainder of this work, we focus on class-wise selection algorithms.
Accordingly, we adopt the simplified notation $\mathbf{x} \sim P$ instead of $(\mathbf{x}, y) \sim P$.
Thus, we also denote datasets using the simplified notation $\mathcal{T} = \{\mathbf{x}_i\}_{i=1}^{N}$ and selected coresets as $\mathcal{S} = \{\mathbf{x}_i\}_{i=1}^{(1 - \alpha) \cdot N}$.

\subsection{Submodular Functions}
Submodularity is a fundamental property of set functions that captures the principle of diminishing returns. 
Since we are interested in selecting the most informative samples first, the submodularity property is especially attractive for coreset selection \cite{iyer2013submodular, kothawade2021submodular, karanam2022orient, wei2015submodularity, doucoreset}. 

\begin{definition}[Submodularity]
A function $f: 2^V \to \mathbb{R}$ defined over a ground set $V$ is called submodular if, for any subsets $A \subseteq B \subseteq V$ and any element $j \in V \setminus B$, it holds that
\begin{equation}
    f(A \cup \{j\}) - f(A) \;\;\geq\;\; f(B \cup \{j\}) - f(B).
\end{equation}
\noindent
This diminishing-returns condition intuitively says that adding an element to a smaller set provides a larger marginal gain than adding it to a bigger set \cite{iyer2021submodular}. 
\end{definition}

Formally, coreset selection can be posed as maximizing a submodular function under a budget constraint:
\begin{equation}
    \mathcal{S}^* \;=\; \argmax_{\mathcal{S} \subset \mathcal{T}: \;\frac{|\mathcal{S}|}{|\mathcal{T}|} \;\approx\; 1 - \alpha}\; f(\mathcal{S}),
\end{equation}
where $f$ is submodular, $\mathcal{T}$ indexes all data samples, and $\alpha$ is the pruning factor. 
A common submodular example is facility location.
\begin{definition}[Facility Location] Facility location \cite{berczi2019facility, wei2014submodular} defines a submodular function $f_{\text{FL}}: 2^\mathcal{T} \to \mathbb{R}$:
\begin{equation}
    f_{\text{FL}}(\mathcal{S}) \;=\; \sum_{\mathbf{x} \in \mathcal{T}} \max_{\mathbf{x}_\mathcal{S} \in \mathcal{S}} \,\operatorname{sim} \bigl(\mathbf{x}, \mathbf{x}_\mathcal{S}\bigr),
\end{equation}
where $\operatorname{sim}$ is typically a similarity function (e.g., cosine) \cite{iyer2013submodular}. 
The facility location function inherently favors coverage because it evaluates each data sample in the entire dataset by taking the maximum similarity to any sample in the selected subset.
\end{definition}
    
Although finding the exact optimal subset $\mathcal{S}^*$ under a submodular objective $f$ is generally NP-hard \cite{svitkina2011submodular, iyer2013fast}, submodular functions enjoy a crucial advantage: they can be approximately maximized via a simple greedy algorithm. 
For the cardinality-constrained case (i.e., limited subset size), the classical result by \textit{Nemhauser et al.} \cite{nemhauser1978analysis} guarantees that greedy selection achieves a $(1 - 1/e) \approx 63\%$ approximation ratio:
    $f(\mathcal{S}_\mathrm{greedy}) \;\;\ge\;\; \bigl(1 - 1/e\bigr)\, f(\mathcal{S}^*).$
This tells us that (i) greedy selection obtains a strong approximation without exhaustive search, (ii) the greedy algorithm guarantees to achieve at least about 63\% of the maximum possible score of the chosen submodular function (such as facility location), and (iii) lazy-greedy optimizations \cite{lim2014lazy, lundberg2017unified} can reduce computational cost significantly.
While one might ask “Why not 100\%?”, the answer is that each greedy step picks the locally best option at that moment, without accounting for future interactions among samples. 
Yet, \emph{greedy suboptimality} has been a well-understood limitation in submodular maximization since 1978, but in practice, the $(1 - 1/e)$ bound on the submodular metric score is often considered both strong and acceptable \cite{berczi2019facility, nemhauser1978analysis, lim2014lazy}.

\begin{wrapfigure}{r}{0.45\textwidth}
  \centering
  \vspace{-1em}
 \includegraphics[width=0.4\textwidth]{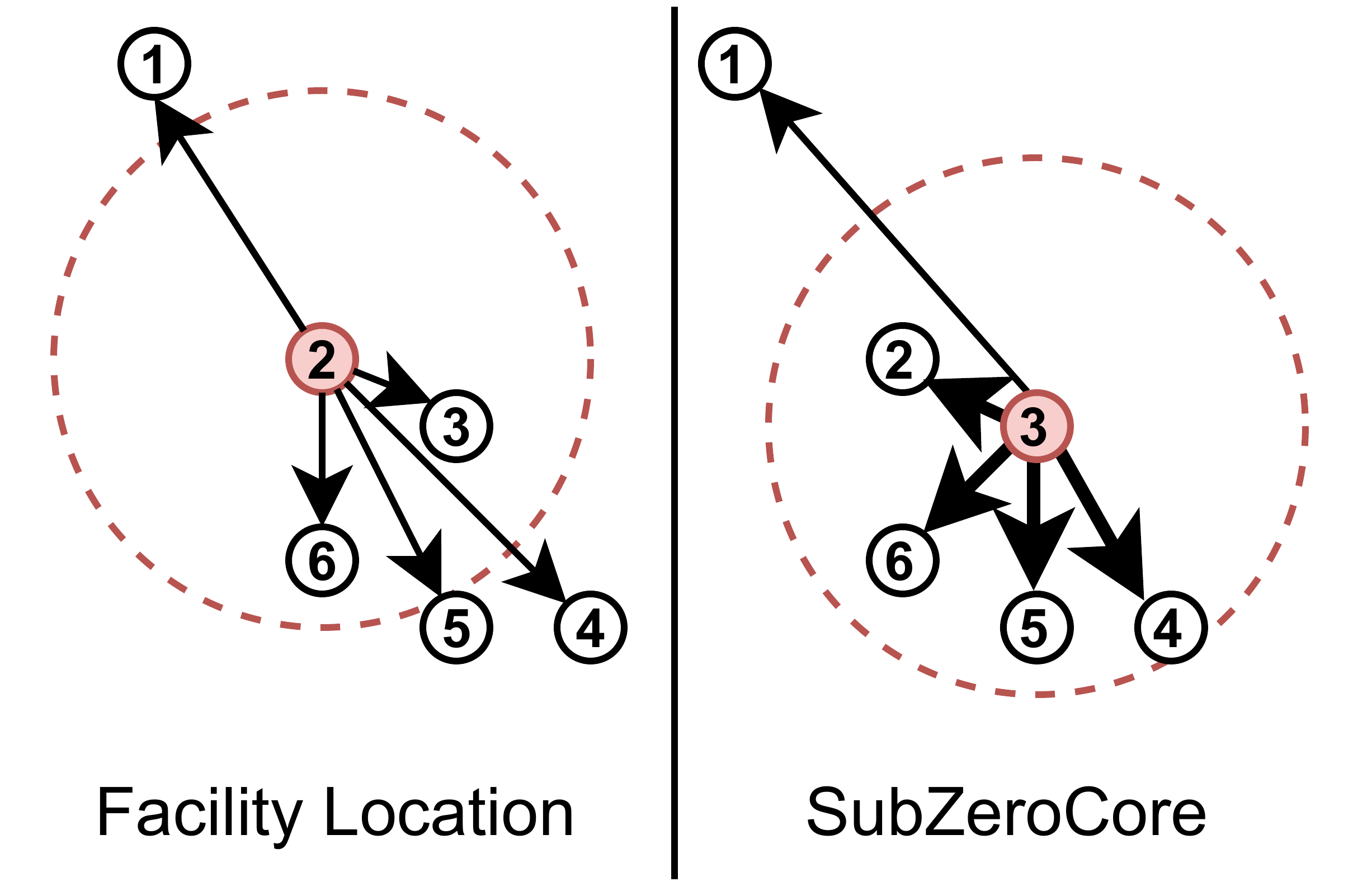}
 \caption{The goal of SubZeroCore: In addition to maximizing coverage through facility location \textbf{(left)}, SubZeroCore \textbf{(right)} also incorporates a density-based weighting scheme, which prioritizes selecting data samples (red dots) from regions of higher local density as emphasized by the bold arrows.}
 \vspace{-2em}
 \label{fig:method}
\end{wrapfigure}

\section{Methodology}
The goal of coreset selection is to select data samples that \textbf{(i)} collectively achieve sufficient coverage of the underlying data distribution and \textbf{(ii)} lie in high-density regions. 
Since both objectives usually counteract each other, existing methods generally choose just one objective:
However, for high pruning ratios, one desires a high-density driven coreset selection method, while a coverage-based method is more favorable for low pruning ratios \cite{zheng2022coverage, sener2017active}.

Thus, balancing both density and coverage within a unified framework remains a significant yet challenging objective. 
We propose SubZeroCore, a new method that combines submodular optimization, i.e, facility location-based coverage maximization, with density-driven importance weighting, as illustrated in \autoref{fig:method}. 

\subsection{Concerning Density}
\begin{definition}[Density]
    For a data sample $\mathbf{x}$, we define its density by finding the size of the neighborhood needed to capture $K$ nearest neighbors in $\mathcal{T}$. If we define the radius by $r = \operatorname{NND}_K(\mathbf{x}),$ 
    where $\operatorname{NND}_K(\mathbf{x})$ denotes the distance of $\mathbf{x}$ to its $K$-th nearest-neighbor, then a common way to express density $\rho_K: \mathcal{T} \rightarrow [0, \infty]$ is via
    \begin{align}
    \rho_K(\mathbf{x}) = \frac{\lvert B(\mathbf{x},r)  \cap \mathcal T\rvert}{\operatorname{Vol}(B(\mathbf{x},r))},
    \label{eq:density}
    \end{align}
    where $B(\mathbf{x},r)$ is a ball around $\mathbf{x}$ with radius $r$, $\operatorname{Vol}$ is the volume  \cite{morgan2016geometric}, and $\lvert B(\mathbf{x},r)  \cap \mathcal T \rvert$ are the amount of elements in $\mathcal{T}$ within that ball. Note that $\lvert B(\mathbf{x},r) \cap \mathcal T \rvert > K$ can occur when there are multiple neighbors with exactly $\operatorname{NND}_K(\mathbf{x})$ distance to $\mathbf{x}$ (also exemplified later in \autoref{fig:density}).
\end{definition}
Informally, density measures how crowded or populated the local region is, thus high for samples with strong support in the real dataset. 
For further simplifications, we introduce the following lemma:
\begin{lemma}\label{lemma_1}
For a given $K$ and any two samples $\mathbf{x}_i, \mathbf{x}_j\in \mathcal{T}$ it holds that 
    $\operatorname{NND}_K(\mathbf{x}_i) < \operatorname{NND}_K(\mathbf{x}_j) \Leftrightarrow \rho_K(\mathbf{x}_i) > \rho_K(\mathbf{x}_j).$
In other words, a sample that requires a smaller radius to capture K neighbors is in a denser region.
\end{lemma}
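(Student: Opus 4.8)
The plan is to observe that, once \eqref{eq:density} is unpacked, the density $\rho_K$ is simply a strictly decreasing function of $\operatorname{NND}_K$, so the biconditional reduces to monotonicity. First I would record the two structural facts behind \eqref{eq:density}. (a) In the (Euclidean) feature space, the volume of a ball depends only on its radius, $\operatorname{Vol}(B(\mathbf{x},r)) = \omega_d\, r^{d}$ for a fixed constant $\omega_d>0$; hence $r\mapsto\operatorname{Vol}(B(\mathbf{x},r))$ is strictly increasing and independent of $\mathbf{x}$. (b) With $r=\operatorname{NND}_K(\mathbf{x})$, the numerator $\lvert B(\mathbf{x},r)\cap\mathcal{T}\rvert$ is a fixed count $m$ — the point $\mathbf{x}$ together with its $K$ nearest neighbours — independent of $\mathbf{x}$, provided no further point of $\mathcal{T}$ lies at exactly distance $\operatorname{NND}_K(\mathbf{x})$. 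Granting (a) and (b), \eqref{eq:density} becomes $\rho_K(\mathbf{x}) = g\!\bigl(\operatorname{NND}_K(\mathbf{x})\bigr)$ with $g(r) = m/(\omega_d\, r^{d})$, a strictly decreasing function on $(0,\infty)$.

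Next I would close the argument with the elementary fact that for $a,b>0$ and any strictly decreasing $g$ one has $a<b \Leftrightarrow g(a)>g(b)$ (the forward direction is the definition of strictly decreasing; conversely, if $g(a)>g(b)$ then $a\ge b$ is impossible, since it would give $g(a)\le g(b)$). Instantiating $a=\operatorname{NND}_K(\mathbf{x}_i)$ and $b=\operatorname{NND}_K(\mathbf{x}_j)$ and substituting $\rho_K=g\circ\operatorname{NND}_K$ gives precisely $\operatorname{NND}_K(\mathbf{x}_i)<\operatorname{NND}_K(\mathbf{x}_j)\Leftrightarrow\rho_K(\mathbf{x}_i)>\rho_K(\mathbf{x}_j)$, which is the claim. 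The degenerate case $\operatorname{NND}_K(\mathbf{x})=0$ (a point coinciding with $K$ or more others) forces $\rho_K(\mathbf{x})=\infty$ and would be excluded or treated separately.

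I expect the only genuine obstacle to be step (b). As the paper itself flags in the remark following \eqref{eq:density}, $\lvert B(\mathbf{x},r)\cap\mathcal{T}\rvert$ can exceed $K$ when several neighbours sit exactly at the tie distance $\operatorname{NND}_K(\mathbf{x})$; then the numerator is no longer a universal constant, $\rho_K$ ceases to be a function of $\operatorname{NND}_K$ alone, and a sample with a marginally larger $K$-NN radius but a large cluster of tied neighbours can come out strictly denser — which breaks the equivalence. I would therefore state and prove the lemma under the mild genericity assumption that the pairwise distances within $\mathcal{T}$ are distinct, so that $B(\mathbf{x},\operatorname{NND}_K(\mathbf{x}))$ contains the same number $m$ of dataset points for every $\mathbf{x}$; this holds for essentially all real-valued feature sets. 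Alternatively, since the lemma is used only qualitatively — to justify treating $\operatorname{NND}_K$ as an inverse density proxy in the subsequent weighting scheme — the generic statement already suffices. Everything else follows at once from (a) and the monotonicity of $g$.
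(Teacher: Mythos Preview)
Your proof is correct and follows essentially the same route as the paper: both reduce $\rho_K$ to a strictly decreasing function of the radius via the ball-volume formula $\operatorname{Vol}(B(\mathbf{x},r))=c_d\,r^d$, from which the biconditional is immediate. You are in fact more careful than the paper's own argument, which silently treats the numerator $\lvert B(\mathbf{x},r)\cap\mathcal{T}\rvert$ as constant and never addresses the tie case you flag in step~(b).
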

\begin{proof}
Consider $r_{\mathbf{x}_i} = \operatorname{NND}_K(\mathbf{x}_i)$ and $r_{\mathbf{x}_j} = \operatorname{NND}_K(\mathbf{x}_j)$ such that $r_{\mathbf{x}_i} < r_{\mathbf{x}_j}$.
Since the volume \cite{morgan2016geometric} of a ball in a $d$-dimensional metric space 
\begin{align*}
    \operatorname{Vol}\big(B(\mathbf x, r) \big) = \frac{\pi^{\frac d 2}}{\Gamma \left( \frac d 2 + 1 \right)} r^d
\end{align*}
is strictly increasing with respect to its radius, it trivially follows that
\begin{align*}
{\operatorname{Vol}\bigl(B(\mathbf{x}_i, r_{\mathbf{x}_i})\bigr) < \operatorname{Vol}\bigl(B(\mathbf{x}_j, r_{\mathbf{x}_j})\bigr) \Leftrightarrow \rho_K(\mathbf{x}_i) > \rho_K(\mathbf{x}_j)}.
\end{align*}
\end{proof}
 Consequently, the ordering of density for each individual sample $\mathbf{x}_i$ depends by how large or small its ball radius $\operatorname{NND}_K(\mathbf{x}_i)) = r_i$ is compared to other samples : (1) If the radius $r_i$ is small, the sample $\mathbf{x}_i$ lies in a densely populated region because its closest neighbors are spatially closer to it. (2) If the radius $r_i$ is large, the sample $\mathbf{x}_i$ lies in a sparsely populated region, implying fewer samples within close proximity.

\subsection{SubZeroCore}
By integrating the density measure for a single sample as a weighting to the facility location, which maximizes coverage, we straightforwardly derive a submodular function dubbed \textbf{SubZeroCore} that encourages both aspects, namely coverage and density.

\begin{definition}[SubZeroCore]
Given a data sample $\mathbf{x} \in \mathcal T$, we define its density based on \autoref{eq:density} by comparing its radius to the overall distribution of neighborhood radii. 
Simply put, a smaller radius implies higher density (see Lemma \autoref{lemma_1}). 
More formally, let $\{r_i\}_{i=1}^{| \mathcal T |}$ be the radii derived from a fixed $K$ via $r_i = \operatorname{NND}_K(\mathbf{x}_i)$. 
We compute a sample density score by its relation to the empirical mean $\mu = \frac{1}{| \mathcal T |}\sum_{i = 1}^{| \mathcal T |} r_i$ and standard deviation $\sigma$ of the radii distribution.
We then define a density score
\begin{equation}
    s_i \;=\; \exp\!\Bigl(-\frac{(r_i - \mu)^2}{2\,\sigma^2}\Bigr).
\end{equation}
By using this normalization, we ensure that density scores are smoothly and consistently assigned, with the highest scores centered around samples whose radii are close to the average density $\mu$, clearly highlighting average dense regions and systematically down-weighting sparse outliers or overly dense inliers.
We then feed these density scores into a weighted facility location function $f_{\text{SubZeroCore}}: 2^\mathcal{T} \to \mathbb{R}$:
\begin{equation}
  f_{\text{SubZeroCore}}(\mathcal{S}) \;=\; \sum_{\mathbf{x}_i \in \mathcal{T}} \max_{\mathbf{x}_j \in \mathcal{S}}\; \bigl(s_j \cdot \operatorname{sim}(\mathbf{x}_i,\mathbf{x}_j)\bigr),
  \label{eq:fl}
\end{equation}
where \(\mathcal{T}\) indexes the entire set of samples in a class, \(\mathcal{S}\subseteq \mathcal{T}\) denotes a candidate coreset, and \(\operatorname{sim}(\mathbf{x}_i,\mathbf{x}_j)\) is, for instance, a cosine similarity defined on the embeddings of \(\mathbf{x}_i\) and \(\mathbf{x}_j\). The term \(s_j\) acts as a density-based weight emphasizing samples in averagely crowded regions. 
\end{definition}

\begin{corollary}
    The SubZeroCore function $f_{\text{SubZeroCore}}$ is submodular.
\end{corollary}
\begin{proof}
    This directly follows from \textit{Berczi et al.} \cite{berczi2019facility} and can be found in the appendix.
\end{proof}

\subsection{Impact of the Radius and its Coverage}

\paragraph{Radius.} The notion of density in dataset pruning heavily relies on the selection of $K$, which sets the scale at which we measure local density, as shown in \autoref{fig:density}. 
This is due to the fact that the volume is monotonically increasing with increasing $K$ and from $B(\mathbf{x}, \operatorname{NND}_K(\mathbf{x})) \subseteq B(\mathbf{x}, \operatorname{NND}_{K+1}(\mathbf{x}))$.
Consequently, if $K$ is small, density estimates become overly sensitive to isolated samples (overfitting outliers). 
Conversely, too large $K$ smooths density differences. 

\begin{figure}[t!]
    \begin{center}
        \includegraphics[width=\textwidth]{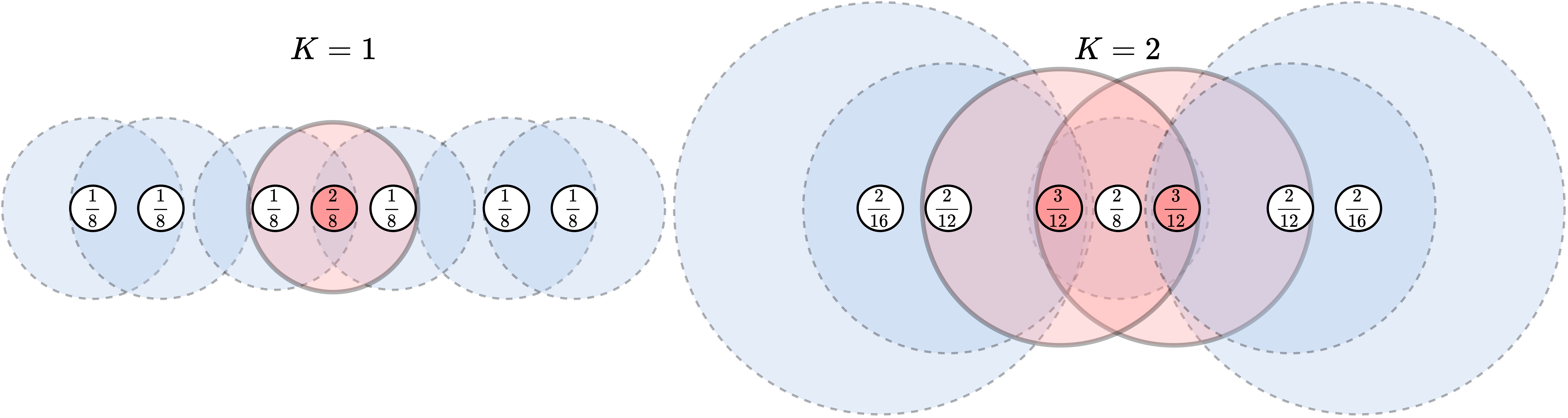}
        \caption{\label{fig:density}
        Visualization of how the notion of sample density, defined as the number of neighbors divided by the volume (see numbers in circles), varies depending on the chosen hyperparameter $K$. Red indicates the densest samples for each setting of $K$. As $K$ increases, the density changes and, more importantly, so does the ordering.
        }
    \end{center}
\end{figure}

Unfortunately, a balanced selection of $K$ depends on the size of the underlying dataset $\mathcal{T}$ and the pruning ratio $\alpha$.
To address this, we directly tie $K$ to an interpretable, desired coverage target $\gamma$ between 0\% and 100\% for the density calculation, thereby systematically guiding the scale at which our method optimally balances coverage with density. 

\paragraph{Coverage.} Inspired by the image synthesis domain and \textit{Naeem et al.}~\cite{naeem2020reliable}, we define:

\begin{definition}[Coverage]
    Coverage is a measure for what fraction of $\mathcal T$-neighborhoods contain a sample of the coreset $\mathcal S$. More formally,
    \begin{align}
    \operatorname{coverage}_K(\mathcal{S}, \mathcal{T}):=&\frac{1}{|\mathcal{T}|}\sum_{\mathbf{x} \in \mathcal{T}} \mathbf{1}_{\exists\text{ $\mathbf{x}_\mathcal{S} \in \mathcal{S}$ s.t. } \mathbf{x}_\mathcal{S}\in B(\mathbf{x},\operatorname{NND}_K(\mathbf{x}))}.
    \label{eq:coverage}
    \end{align}
    where $B(\mathbf{x},\operatorname{NND}_K(\mathbf{x}))$ is again a ball around $\mathbf{x}$ with radius $\operatorname{NND}_K(\mathbf{x})$, which is defined by its distance to its $K$-th nearest-neighbor.
\end{definition}

\begin{lemma} The expected coverage of a coreset of size $s \leq | \mathcal T | - K$ and a given $K$ is
    \begin{align}
    \mathbb{E}_{\mathcal S \sim \mathcal{U}(2^\mathcal{T})}\big[ \operatorname{coverage}_K(\mathcal{S}, \mathcal{T}) \big| | \mathcal S | = s \big]
    &=1-\prod_{k=0}^K \frac{\left( |\mathcal{T}| - s - k \right)}{ \left(|\mathcal{T}| - k \right)}.
    \label{eq:expected_coverage}
    \end{align}
\end{lemma}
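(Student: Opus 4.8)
The plan is to compute the expectation by linearity over the $|\mathcal T|$ indicator events ``$\mathbf x$ is covered'', each of which collapses to a purely combinatorial question about a uniformly random $s$-subset avoiding a fixed neighbor set. Write $N = |\mathcal T|$ and, for a fixed $\mathbf x \in \mathcal T$, set $N_\mathbf{x} := B(\mathbf x, \operatorname{NND}_K(\mathbf x)) \cap \mathcal T$. The first step is to note that the event inside the indicator in \autoref{eq:coverage}, that some $\mathbf x_\mathcal{S} \in \mathcal S$ lies in $B(\mathbf x, \operatorname{NND}_K(\mathbf x))$, is exactly $\mathcal S \cap N_\mathbf{x} \neq \emptyset$, since $\mathcal S \subseteq \mathcal T$. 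Hence $\operatorname{coverage}_K(\mathcal S, \mathcal T) = \tfrac1N \sum_{\mathbf x \in \mathcal T} \mathbf 1_{\mathcal S \cap N_\mathbf{x} \neq \emptyset}$, and by linearity of expectation it suffices to evaluate $\mathbb P_{\mathcal S}\big[\mathcal S \cap N_\mathbf{x} \neq \emptyset \mid |\mathcal S| = s\big]$ for each $\mathbf x$.

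Next I would pin down $|N_\mathbf{x}|$. Assuming generic position (distinct pairwise distances, hence no ties at the $K$-th nearest-neighbor distance), the closed ball $B(\mathbf x, \operatorname{NND}_K(\mathbf x))$ contains precisely $\mathbf x$ itself together with its $K$ nearest neighbors, so $|N_\mathbf{x}| = K+1$ independently of $\mathbf x$. If ties occur then $|N_\mathbf{x}| \ge K+1$ and the identity below becomes a lower bound on the expected coverage; I would flag this explicitly, echoing the remark after \autoref{eq:density}. Since conditioning the uniform law on $2^{\mathcal T}$ on $|\mathcal S| = s$ yields the uniform distribution over the $\binom{N}{s}$ subsets of size $s$, the probability that $\mathcal S$ misses the fixed $(K+1)$-element set $N_\mathbf{x}$ is the ratio of the number of $s$-subsets of $\mathcal T \setminus N_\mathbf{x}$ to all $s$-subsets of $\mathcal T$:
\[
\mathbb P\big[\mathcal S \cap N_\mathbf{x} = \emptyset \,\big|\, |\mathcal S| = s\big] \;=\; \frac{\binom{N - K - 1}{s}}{\binom{N}{s}} .
\]
The hypothesis $s \le N - K$ is precisely what keeps this well-posed, with equality forcing the ratio to $0$, i.e.\ coverage deterministically equal to $1$.

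The remaining step is the routine simplification of this binomial ratio: expanding the factorials and cancelling $s!$ gives $\binom{N-K-1}{s}/\binom{N}{s} = \frac{(N-s)(N-s-1)\cdots(N-s-K)}{N(N-1)\cdots(N-K)} = \prod_{k=0}^{K} \frac{N - s - k}{N - k}$, a product of $K+1$ telescoping factors. Substituting back, $\mathbb P[\mathcal S \cap N_\mathbf{x} \neq \emptyset \mid |\mathcal S| = s] = 1 - \prod_{k=0}^K \frac{N-s-k}{N-k}$, which is independent of $\mathbf x$; averaging this over the $N$ points of $\mathcal T$ reproduces the same expression and yields the claim. I do not expect a genuine obstacle here; the one point that needs care is the count $|N_\mathbf{x}| = K+1$ — remembering that $\mathbf x$ lies in its own ball and that the closed ball includes the $K$-th neighbor — together with the handling of distance ties, which only strengthens the bound and never breaks it.
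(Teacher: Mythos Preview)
Your proof is correct and follows essentially the same route as the paper's: linearity of expectation reduces the question to a single-point complement probability, which is then computed as a ratio of binomial coefficients and simplified to the claimed product. You are in fact more careful than the paper on two counts --- you correctly identify $|N_{\mathbf x}| = K+1$ (the paper's intermediate binomial $\binom{|\mathcal T|-K}{|\mathcal S|}$ is off by one relative to the $(K{+}1)$-term product it states), and you explicitly flag the distance-tie caveat.
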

\begin{proof}
    \begin{align}
        \mathbb{E}_{\mathcal S \sim \mathcal{U}(2^\mathcal{T})} &\big[ \operatorname{coverage}_K(\mathcal{S}, \mathcal{T}) \big| | \mathcal S | = s \big] \\
        &=\frac{1}{|\mathcal{T}|}\sum_{\mathbf{x} \in \mathcal{T}}\mathbb{P}\left[\exists\text{ $\mathbf{x}_\mathcal{S} \in \mathcal{S}$ s.t. } \mathbf{x}_\mathcal{S}\in B(\mathbf{x},\operatorname{NND}_K(\mathbf{x}))\right]\nonumber \\
        &\overset{\text{(i)}}{=}1-\mathbb{P}\left[\forall\text{ $\mathbf{x}_\mathcal{S} \in \mathcal{S}$, } \mathbf{x}_\mathcal{S}\notin B(\mathbf{x}_1,\operatorname{NND}_K(\mathbf{x}_1))\right]
        \nonumber \\
        &= 1-\mathbb{P}\left[ S \cap B(\mathbf{x}_1,\operatorname{NND}_K(\mathbf{x}_1)) = \emptyset \right] \nonumber
    \end{align}
    Since by the uniform nature of $\mathcal S$ all samples $x \in \mathcal T$ are treated equally, we can fix a particular test sample \(\mathbf{x}_1\in\mathcal{T}\) in step (i).  
    The notation \(\mathbf{x}_1\) emphasizes that this sample is now held fixed when we compute 
    \(\mathbb{P}[\forall\,\mathbf{x}_\mathcal{S}\in \mathcal{S}:\,\dots]\). 
    It plays the same role as any other $\mathbf{x}$ in $\mathcal{T}$.
    We can reformulate the probability as follows:
    \begin{quote}
        Let $Z=(z_1, \ldots, z_{|\mathcal{T}|})$ be $|\mathcal{T}|$ non-negative real numbers distributed 
        i.i.d. according to $\mathbb{P}_Z$. Select $|\mathcal{S}| = s$ many of them uniformly at random, i.e., 
        the expected value is over $\mathcal S \sim \mathcal{U}(2^\mathcal{T})$. What is the probability that the $K$ smallest entries among $Z$ are not in $\mathcal{S}$?
    \end{quote}
    Since the selection is equally likely, we can calculate the probability by counting the ratio of possible selections where the $K$ smallest elements are not selected, which for $|\mathcal{S}| < |\mathcal{T}| - K$ boils down to:
    \begin{align}
        \mathbb{P}\left[\forall\text{ $\mathbf{x}_\mathcal{S} \in \mathcal{S}$, } \mathbf{x}_\mathcal{S}\notin B(\mathbf{x}_1,\operatorname{NND}_K(\mathbf{x}_1))\right]
        =\frac{\binom{|\mathcal{T}|-K}{|\mathcal{S}|}}{\binom{|\mathcal{T}|}{|\mathcal{S}|}}
        =\prod_{k=0}^K \frac{\left( |\mathcal{T}| - |\mathcal{S}| - k \right)}{ \left(|\mathcal{T}| - k \right)}. \nonumber
    \end{align}
    For $|\mathcal{S}| \geq |\mathcal{T}| - K$ (not interesting for coreset selection), the probability becomes 1.
\end{proof}

\subsection{Determining the Radius}

\autoref{fig:expCovEmp} illustrates how the expected coverage (\autoref{eq:expected_coverage}) evolves as $K$ increases under varying pruning levels. 
We see that the expected coverage tends to rise concavely, indicating diminishing returns once a sufficiently large neighborhood is considered. 
Higher pruning ratios accentuate this effect, as removing more samples reduces the coverage for a given radius-defining $K$.

\begin{wrapfigure}{r}{0.45\textwidth}
  \centering
  \vspace{-1em}
 \includegraphics[width=0.45\textwidth]{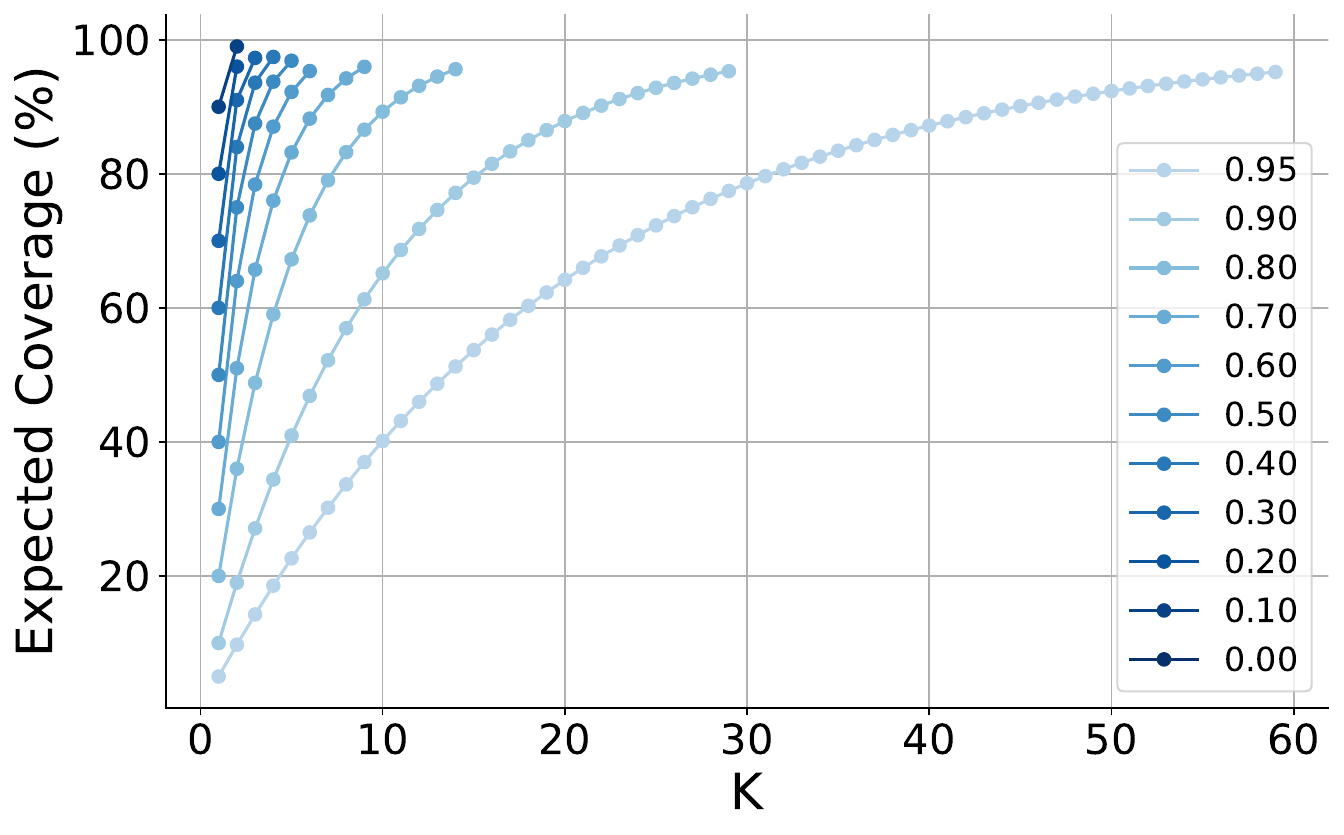}
 \vspace{-2em}
 \caption{Expected coverage as a function of $K$ across varying pruning ratios. As $K$ increases, the expected coverage follows a nonlinear trajectory, aligning with the expectation of diminishing returns of additional samples under pruning. 
 }
 \vspace{-3em}
 \label{fig:expCovEmp}
\end{wrapfigure}

Following this analysis, we repurpose the closed-form expectation in \autoref{eq:expected_coverage} to estimate a suitable value of $K$ for our density calculation under a target coverage. 
Concretely, for a given coverage goal $\gamma \in (0,1)$, one can (numerically) invert the expression for assigning $K$ to
\begin{align*}
    \min  \bigg\{ K \in \mathbb N \bigg| 
    1-\gamma \leq \prod_{k=0}^{K} \frac{\left( |\mathcal{T}| - |\mathcal{S}| - k \right)}{ \left(|\mathcal{T}| - k \right)} \bigg\}
\end{align*}
which finds a suitable $K$ that achieves $\operatorname{coverage}_K(\mathcal{S}, \mathcal{T}) \approx \gamma$ under the given conditions. 
Once this $K$ is determined, we can substitute it back into the density formula in \autoref{eq:fl} to assign an importance weight to each sample. 
More details in the appendix.

Notably, the expected coverage in \autoref{eq:expected_coverage} is agnostic to the underlying data and coreset distribution, which means we can calculate it without requiring any training or knowledge about the dataset except its magnitude.
In other words, the distance-based counting of neighbors in the set $\mathcal{S}$ (scaled by the chosen $K$) provides a straightforward training-free importance weighting scheme. 
This ensures that samples that are more densely surrounded receive greater importance in subsequent pruning.

In summary, by estimating and settling on such a \(K\), we unify coverage and density into a single selection procedure. 
Specifically, once \(K\) is determined from our coverage objective (\autoref{eq:expected_coverage}), we compute the \(K\)-nearest-neighbor radii for each data sample \(\mathbf{x}_i\).
We then greedily select from \(\mathcal{T}\) the subset \(\mathcal{S}\) of the required size \(\lvert \mathcal{S}\rvert=(1-\alpha)\cdot \lvert \mathcal{T}\rvert\) that maximizes \(f_{\text{SubZeroCore}}(\mathcal{S})\) in \autoref{eq:fl}. 
Owing to the submodularity and monotonicity of the facility location objective, this greedy selection achieves the \((1 - 1/e)\) approximation guarantee (see \textit{Nemhauser et al.} \cite{nemhauser1978analysis}). 

Overall, SubZeroCore systematically and effectively reconciles the often competing demands of coverage and density within a single submodular optimization target. 
By deriving the single hyperparameter $K$ from a closed-form solution, our method achieves a robust and efficient coreset selection without any training overhead, making it practically attractive for scalable deep-learning applications.

\subsection{Implications for Submodularity and Global Coverage}

\begin{wraptable}{r}{0.5\textwidth}
  \centering
  \vspace{-1.3em} 
  \caption{Coverage on CIFAR-10 calculated with respect to the corresponding $K$-value: \autoref{eq:expected_coverage} with target coverage $\gamma=0.6$ delivered the $K$-values 84, 18, 9, 3 for pruning factors 99\%, 95\%, 90\%, 70\%, respectively.}
  \label{tab:coverage}
  \resizebox{0.4\textwidth}{!}{%
    \begin{tabular}{ccccccc}
    \toprule
    Pruning Factor $(\alpha)$ & 99\% & 95\% & 90\% & 70\% \\
    \midrule
    Facility Location  & \textbf{56.16} & 60.36 & 60.97 & 65.38 \\
      \textbf{SubZeroCore (ours)} & 46.77 & \textbf{73.73} & \textbf{80.49} & \textbf{89.03} \\
    \bottomrule
    \end{tabular}
  }
  \vspace{-1em}
\end{wraptable}

Since the density scores $s_j$ are smaller in sparse regions (where $r_j$ is large) and close to 1 in averagely denser regions (where $r_j$ is small), the weighted objective penalizes the contribution of samples in sparse areas even if they might improve global coverage.
Thus, for lower pruning ratios and smaller $K$, our approach tends to lead to better coreset coverage due to its focus on averagely dense regions, while for higher pruning ratios and higher $K$, it tends to lead to lower coreset coverage.
As a consequence, its focus on data density over data coverage is more profound for high pruning ratios, a property generally favorable for coreset selection \cite{sener2017active}.
Empirical validation is provided in \autoref{tab:coverage}.

\begin{table*}[t!]

  \centering
  \caption{\label{tab:cifar}Coreset performances on CIFAR-10 with five randomly initialized ResNet-18 \cite{he2016deep} models. 
  Without pruning $(\alpha=0\%)$, the model reaches 95.6$\pm$0.1.
  }
  \resizebox{\linewidth}{!}{
 \footnotesize{
 \setlength{\tabcolsep}{2pt}
\begin{tabular}{ccccccccccccc}
\toprule
Pruning Factor $(\alpha)$ & 99.9\%       & 99.5\%       & 99\%       & 95\%       & 90\%      & 80\%      & 70\%      & 60\%      & 50\%      & 40\%      & 10\%      & Train Signals        \\ \cmidrule(lr){1-13}
Herding   \cite{welling2009herding}  & 19.8\hspace{0.02em}$\pm$\hspace{0.02em}2.7          & 29.2\hspace{0.02em}$\pm$\hspace{0.02em}2.4          & 31.1\hspace{0.02em}$\pm$\hspace{0.02em}2.9          & 50.7\hspace{0.02em}$\pm$\hspace{0.02em}1.6          & 63.1\hspace{0.02em}$\pm$\hspace{0.02em}3.4          & 75.2\hspace{0.02em}$\pm$\hspace{0.02em}1.0          & 80.8\hspace{0.02em}$\pm$\hspace{0.02em}1.5          & 85.4\hspace{0.02em}$\pm$\hspace{0.02em}1.2          & 88.4\hspace{0.02em}$\pm$\hspace{0.02em}0.6          & 90.9\hspace{0.02em}$\pm$\hspace{0.02em}0.4          & 94.4\hspace{0.02em}$\pm$\hspace{0.02em}0.1          & \xmark \\
k-Center Greedy \cite{sener2017active} & 19.9\hspace{0.02em}$\pm$\hspace{0.02em}0.9          & 25.3\hspace{0.02em}$\pm$\hspace{0.02em}0.9          & 32.6\hspace{0.02em}$\pm$\hspace{0.02em}1.6          & 55.6\hspace{0.02em}$\pm$\hspace{0.02em}2.8          & 74.6\hspace{0.02em}$\pm$\hspace{0.02em}0.9          & \textbf{87.3\hspace{0.02em}$\pm$\hspace{0.02em}0.2}          & 91.0\hspace{0.02em}$\pm$\hspace{0.02em}0.3          & 92.6\hspace{0.02em}$\pm$\hspace{0.02em}0.2          & 93.5\hspace{0.02em}$\pm$\hspace{0.02em}0.5          & 94.3\hspace{0.02em}$\pm$\hspace{0.02em}0.2          & 95.5\hspace{0.02em}$\pm$\hspace{0.02em}0.2          & \xmark \\ \cmidrule(lr){1-13}
Forgetting   \cite{toneva2018empirical}  & 21.3\hspace{0.02em}$\pm$\hspace{0.02em}1.2 & 29.7\hspace{0.02em}$\pm$\hspace{0.02em}0.3 & 35.6\hspace{0.02em}$\pm$\hspace{0.02em}1.0 & 51.1\hspace{0.02em}$\pm$\hspace{0.02em}2.0 & 66.9\hspace{0.02em}$\pm$\hspace{0.02em}2.0 & 86.6\hspace{0.02em}$\pm$\hspace{0.02em}1.0 & \textbf{91.7\hspace{0.02em}$\pm$\hspace{0.02em}0.3} & \textbf{93.0\hspace{0.02em}$\pm$\hspace{0.02em}0.2} & 94.1\hspace{0.02em}$\pm$\hspace{0.02em}0.2 & 94.6\hspace{0.02em}$\pm$\hspace{0.02em}0.2 & 95.4\hspace{0.02em}$\pm$\hspace{0.02em}0.1 & \cmark \\
GraNd  \cite{paul2021deep}   & 14.6\hspace{0.02em}$\pm$\hspace{0.02em}0.8 & 17.2\hspace{0.02em}$\pm$\hspace{0.02em}0.8 & 18.6\hspace{0.02em}$\pm$\hspace{0.02em}0.8 & 28.9\hspace{0.02em}$\pm$\hspace{0.02em}0.5 & 41.3\hspace{0.02em}$\pm$\hspace{0.02em}1.3 & 71.1\hspace{0.02em}$\pm$\hspace{0.02em}1.3 & 88.3\hspace{0.02em}$\pm$\hspace{0.02em}1.0 & \textbf{93.0\hspace{0.02em}$\pm$\hspace{0.02em}0.4} & \textbf{94.8\hspace{0.02em}$\pm$\hspace{0.02em}0.1} & \textbf{95.2\hspace{0.02em}$\pm$\hspace{0.02em}0.1} & 95.5\hspace{0.02em}$\pm$\hspace{0.02em}0.1          & \cmark \\ \cmidrule(lr){1-13}
CAL   \cite{margatina2021active}      & 23.1\hspace{0.02em}$\pm$\hspace{0.02em}1.8          & 31.7\hspace{0.02em}$\pm$\hspace{0.02em}0.9          & 39.7\hspace{0.02em}$\pm$\hspace{0.02em}3.8          & 60.8\hspace{0.02em}$\pm$\hspace{0.02em}1.4          & 69.7\hspace{0.02em}$\pm$\hspace{0.02em}0.8          & 79.4\hspace{0.02em}$\pm$\hspace{0.02em}0.9          & 85.1\hspace{0.02em}$\pm$\hspace{0.02em}0.7          & 87.6\hspace{0.02em}$\pm$\hspace{0.02em}0.3          & 89.6\hspace{0.02em}$\pm$\hspace{0.02em}0.4          & 90.9\hspace{0.02em}$\pm$\hspace{0.02em}0.4          & 94.7\hspace{0.02em}$\pm$\hspace{0.02em}0.2          & \cmark \\
DeepFool     \cite{ducoffe2018adversarial}    & 18.7\hspace{0.02em}$\pm$\hspace{0.02em}0.9          & 26.4\hspace{0.02em}$\pm$\hspace{0.02em}1.1          & 28.3\hspace{0.02em}$\pm$\hspace{0.02em}0.6          & 47.7\hspace{0.02em}$\pm$\hspace{0.02em}3.5          & 61.2\hspace{0.02em}$\pm$\hspace{0.02em}2.8          & 82.7\hspace{0.02em}$\pm$\hspace{0.02em}0.5          & 90.8\hspace{0.02em}$\pm$\hspace{0.02em}0.5          & 92.9\hspace{0.02em}$\pm$\hspace{0.02em}0.2          & 94.4\hspace{0.02em}$\pm$\hspace{0.02em}0.1          & 94.8\hspace{0.02em}$\pm$\hspace{0.02em}0.1          & \textbf{95.6\hspace{0.02em}$\pm$\hspace{0.02em}0.1}          & \cmark\\ \cmidrule(lr){1-13}
Craig     \cite{mirzasoleiman2020coresets}   & 19.3\hspace{0.02em}$\pm$\hspace{0.02em}0.3          & 29.1\hspace{0.02em}$\pm$\hspace{0.02em}1.6          & 32.8\hspace{0.02em}$\pm$\hspace{0.02em}1.8          & 42.5\hspace{0.02em}$\pm$\hspace{0.02em}1.7          & 59.9\hspace{0.02em}$\pm$\hspace{0.02em}2.1          & 78.1\hspace{0.02em}$\pm$\hspace{0.02em}2.5          & 90.0\hspace{0.02em}$\pm$\hspace{0.02em}0.5          & 92.8\hspace{0.02em}$\pm$\hspace{0.02em}0.2          & 94.3\hspace{0.02em}$\pm$\hspace{0.02em}0.2          & 94.8\hspace{0.02em}$\pm$\hspace{0.02em}0.1          & 95.5\hspace{0.02em}$\pm$\hspace{0.02em}0.1          & \cmark \\
GradMatch   \cite{killamsetty2021grad}  & 17.4\hspace{0.02em}$\pm$\hspace{0.02em}1.6          & 27.1\hspace{0.02em}$\pm$\hspace{0.02em}1.1          & 27.7\hspace{0.02em}$\pm$\hspace{0.02em}2.0          & 41.8\hspace{0.02em}$\pm$\hspace{0.02em}2.4          & 55.5\hspace{0.02em}$\pm$\hspace{0.02em}2.3          & 78.1\hspace{0.02em}$\pm$\hspace{0.02em}2.0          & 89.6\hspace{0.02em}$\pm$\hspace{0.02em}0.7          & 92.7\hspace{0.02em}$\pm$\hspace{0.02em}0.5          & 94.1\hspace{0.02em}$\pm$\hspace{0.02em}0.2          & 94.7\hspace{0.02em}$\pm$\hspace{0.02em}0.3          & 95.4\hspace{0.02em}$\pm$\hspace{0.02em}0.1          & \cmark \\ \cmidrule(lr){1-13}
Glister  \cite{killamsetty2021glister}  & 18.4\hspace{0.02em}$\pm$\hspace{0.02em}1.3          & 26.5\hspace{0.02em}$\pm$\hspace{0.02em}0.7          & 29.4\hspace{0.02em}$\pm$\hspace{0.02em}1.9          & 42.1\hspace{0.02em}$\pm$\hspace{0.02em}1.0          & 56.8\hspace{0.02em}$\pm$\hspace{0.02em}1.8          & 77.2\hspace{0.02em}$\pm$\hspace{0.02em}2.4          & 88.8\hspace{0.02em}$\pm$\hspace{0.02em}0.6          & 92.7\hspace{0.02em}$\pm$\hspace{0.02em}0.4          & 94.2\hspace{0.02em}$\pm$\hspace{0.02em}0.1          & 94.8\hspace{0.02em}$\pm$\hspace{0.02em}0.2          & 95.5\hspace{0.02em}$\pm$\hspace{0.02em}0.1 & \cmark \\ \cmidrule(lr){1-13}
Facility Location         & 21.0\hspace{0.02em}$\pm$\hspace{0.02em}1.3          & 30.3\hspace{0.02em}$\pm$\hspace{0.02em}1.2          & 38.1\hspace{0.02em}$\pm$\hspace{0.02em}1.3          & 58.8\hspace{0.02em}$\pm$\hspace{0.02em}2.3          & 70.9\hspace{0.02em}$\pm$\hspace{0.02em}1.9          & 86.6\hspace{0.02em}$\pm$\hspace{0.02em}0.9          & 91.2\hspace{0.02em}$\pm$\hspace{0.02em}0.4 & 92.9\hspace{0.02em}$\pm$\hspace{0.02em}0.2          & 94.3\hspace{0.02em}$\pm$\hspace{0.02em}0.1          & 94.7\hspace{0.02em}$\pm$\hspace{0.02em}0.1          & 95.5\hspace{0.02em}$\pm$\hspace{0.02em}0.1          & \xmark \\

\textbf{SubZeroCore (ours)} &
\textbf{24.0\hspace{0.02em}$\pm$\hspace{0.02em}1.9} &
\textbf{32.9\hspace{0.02em}$\pm$\hspace{0.02em}1.5} &
\textbf{39.8\hspace{0.02em}$\pm$\hspace{0.02em}1.1} &
\textbf{63.9\hspace{0.02em}$\pm$\hspace{0.02em}2.0} &
\textbf{77.4\hspace{0.02em}$\pm$\hspace{0.02em}0.8} &
\textbf{87.3\hspace{0.02em}$\pm$\hspace{0.02em}0.5} &
90.8\hspace{0.02em}$\pm$\hspace{0.02em}0.3 &
92.5\hspace{0.02em}$\pm$\hspace{0.02em}0.1 &
93.2\hspace{0.02em}$\pm$\hspace{0.02em}0.1 &
94.1\hspace{0.02em}$\pm$\hspace{0.02em}0.1 &
95.3\hspace{0.02em}$\pm$\hspace{0.02em}0.1 &
\xmark
\\ \midrule
\end{tabular}
}
}
\vspace{-1em}
\end{table*}

\begin{wrapfigure}{r}{0.55\textwidth}
  \centering
  \vspace{-4.25em}
 \includegraphics[width=0.55\textwidth]{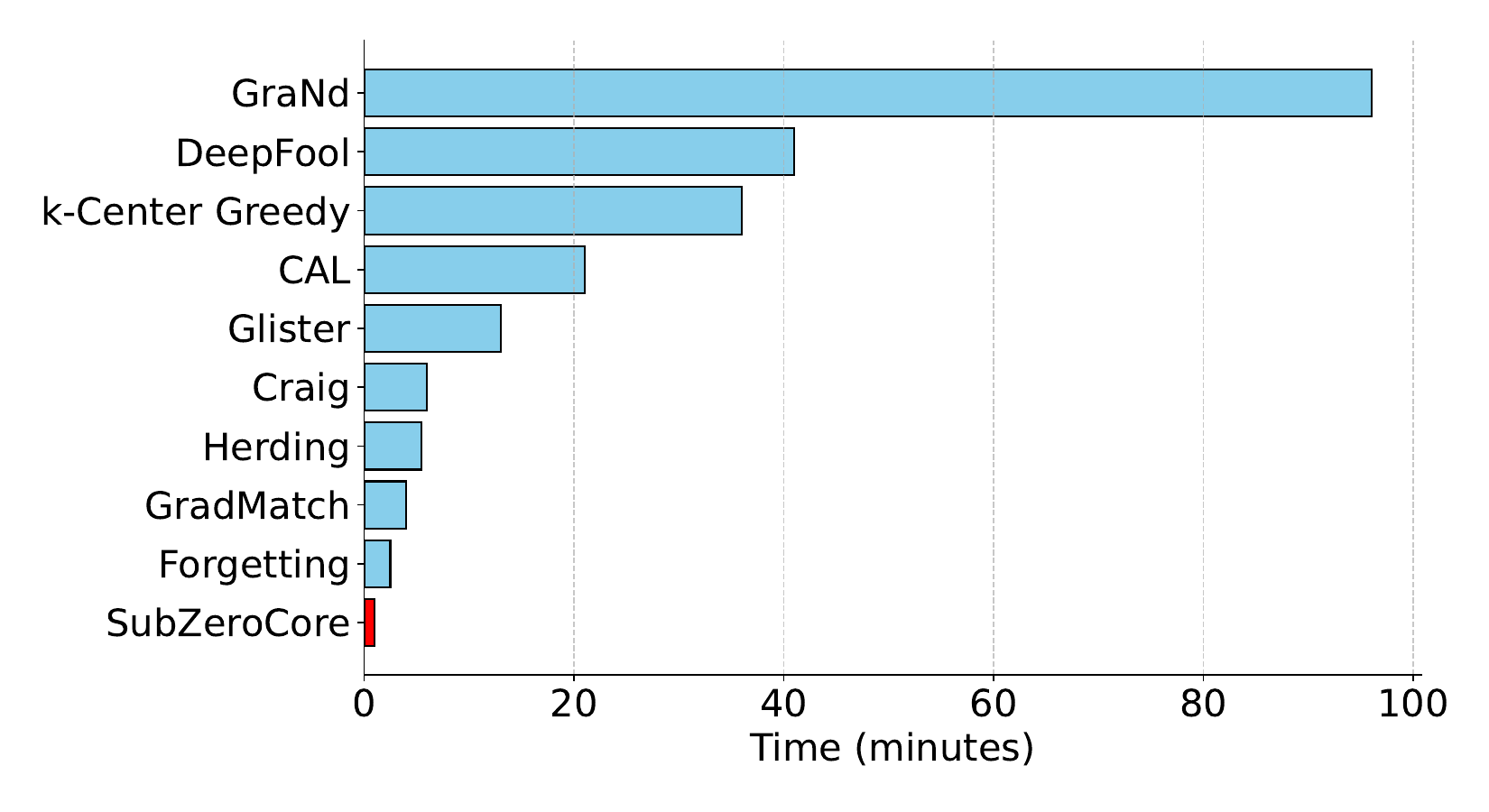}
 \caption{\label{fig:time} Time-Measurement on CIFAR-10. The bar chart compares the selection times (in minutes) of various methods ($\alpha=0.99$). SubZeroCore (red) significantly outperforms all other methods, requiring only 1 minute, while other techniques take substantially longer due to the prior training phase before pruning.}
 \vspace{-4em}
\end{wrapfigure}

\section{Experiments}
This section provides our experiments on CIFAR-10 \cite{krizhevsky2009learning} and ImageNet-1K \cite{deng2009imagenet}, which evaluates our method SubZeroCore under various aspects, such as overall coreset quality, runtime, and robustness. 

\subsection{CIFAR-10 Results}
\paragraph{Setup.} For CIFAR-10, we follow the training protocols of DeepCore \cite{guo2022deepcore}. 
Concretely, we use five ResNet-18 \cite{he2016deep} models trained with stochastic gradient descent (SGD) on coresets for 200 epochs, using a batch size of 128, an initial learning rate of 0.1 with cosine annealing, momentum 0.9, and weight decay $5\times 10^{-4}$ and evaluate the trained model on the standard CIFAR-10 test set.
We subselect multiple fractions from the full training set, whose performance we treat as an approximate upper bound. 
Data augmentation includes a random 4-pixel padding followed by cropping to $32\times32$, and random horizontal flips.

\paragraph{Main Results.} In \autoref{tab:cifar}, we show how SubZeroCore compares against existing coreset selection methods on CIFAR-10 under various pruning ratios (from 10\% up to 99.9\%). 
Notably, our approach closely matches all baselines for lower pruning rates (70\% and below), or consistently outperforms for pruning ratios above 70\%, especially for ultra-scarce settings. 
More details on complexity and additional cross-architecture evaluations (VGG-16 \cite{simonyan2014very}, InceptionNetV3 \cite{szegedy2016rethinking}, WRN-16-8 \cite{zagoruyko2016wide}, and ResNet-50 \cite{he2016deep}) can found in the appendix.
Moreover, we achieve all results while being notably faster due to our training-free setup, as shown in \autoref{fig:time}.

\paragraph{Robustness.} 
To assess the stability of our coreset selection method under label noise or malicious relabeling, we follow a poisoning protocol similar to that in \textit{Zhang et al.} \cite{zhang2021understanding}. 
Specifically, we randomly relabel 10\% of CIFAR-10 training examples to incorrect classes, thereby introducing a form of data poisoning. 
We then run each coreset selection method on this poisoned dataset, subsampling different fractions. 
The relative accuracy change (compared to no poisoning) is shown in \autoref{fig:robust}.
We observe that our method SubZeroCore demonstrates profound robustness among all baselines, effectively mitigating the detrimental effects of relabeling noise (i.e., mislabeled data). 
Notably, its performance remains superior to the standard facility location method. 
In fact, by incorporating the density-weighted mechanism, our method downweights outlier samples (where mislabeled or corrupted data often lie), yielding a stable coreset even under harsh poisoning scenarios. 
Such improvements highlight that the density weighting scheme is not only beneficial for standard data selection but also enhances resilience to adversarial or noisy training conditions.

\subsection{ImageNet-1K Results}
\paragraph{Setup.} For our ImageNet-1K experiments, we train three ResNet-18 models on the selected coresets using a batch size of 256 for 200 epochs. 
Training images are randomly cropped and resized to $224\times224$, and horizontal flipping is applied with a probability of 50\%. 
All other experimental settings and training hyperparameters are identical to those used in our CIFAR-10 experiments. 

\paragraph{Main Results.} As shown in \autoref{tab:imagenet}, SubZeroCore consistently ranks among the top-performing methods across all pruning levels, outperforming nearly all training-based approaches. 
In particular, it matches or slightly exceeds Forgetting at higher pruning ratios and outperforms Craig, GradMatch, and CAL most of the time. Notably, SubZeroCore achieves this performance without any model-specific training.

\begin{table}
  \centering
  \vspace{-1em} 
  \caption{Coreset selection performances on ImageNet-1K. We train randomly initialized ResNet-18 on the pruned subsets produced by various methods and test on the real ImageNet test set. DeepFool and GraNd were omitted due to their significant memory requirements and runtime.}
  \label{tab:imagenet}
  \resizebox{0.75\textwidth}{!}{%
    \begin{tabular}{cccccccc}
    \toprule
    Pruning Factor $(\alpha)$ & 90\% & 80\% & 70\% & 60\% & 50\% &  0\% & Train Signals \\
    \midrule
    Herding \cite{welling2009herding} & 29.17$\pm$0.23 & 41.26$\pm$0.43 & 48.71$\pm$0.23 & 54.65 $\pm$ 0.07& 58.92 $\pm$ 0.19&69.52$\pm$0.45 &\xmark \\
    k-Center Greedy \cite{sener2017active} & 48.11$\pm$0.29 & 59.06$\pm$0.22 & 62.91$\pm$0.22 & 64.93 $\pm$ 0.22  & 66.04 $\pm$ 0.05 &69.52$\pm$0.45 &\xmark\\
    \midrule
    Forgetting \cite{toneva2018empirical} & \textbf{55.31$\pm$0.07} & 60.36$\pm$0.12 & 62.45$\pm$0.11 & 63.97 $\pm$ 0.01& 65.06 $\pm$ 0.02 &69.52$\pm$0.45 &\cmark \\
    \midrule
    CAL\cite{margatina2021active} & 46.08$\pm$0.10 & 53.71$\pm$0.19 & 58.11$\pm$0.13 & 61.17$\pm$0.06 & 63.67 $\pm$0.28 &69.52$\pm$0.45 &\cmark \\
    \midrule
    Craig \cite{mirzasoleiman2020coresets} & 51.39$\pm$0.13 & 59.33$\pm$0.22 & 62.72$\pm$0.13 & 64.96$\pm$0.00 & \textbf{66.29 $\pm$0.00} &69.52$\pm$0.45 &\cmark\\
    GradMatch \cite{killamsetty2021grad} & 47.57$\pm$0.32 & 56.29$\pm$0.31 & 60.62$\pm$0.28 & 64.40$\pm$0.33 & 65.02 $\pm$ 0.50 &69.52$\pm$0.45 &\cmark\\
    \midrule
    Glister \cite{killamsetty2021glister} & 47.02$\pm$0.29 & 55.93$\pm$0.17 & 60.38$\pm$0.17 & 62.86$\pm$0.07 & 65.07$\pm$0.08 &69.52$\pm$0.45 &\cmark\\
    \midrule
    Facility Location & 52.49$\pm$0.19 & 60.06$\pm$0.11 & 63.05 $\pm$ 0.06 & 65.24 $\pm$ 0.04 & 66.05 $\pm$ 0.07 &69.52$\pm$0.45 &\xmark \\
    \textbf{SubZeroCore (ours)} & 54.01$\pm$0.14 & \textbf{60.78 $\pm$0.05} & \textbf{63.35 $\pm$0.11} & \textbf{65.32 $\pm$0.04} & 66.14 $\pm$0.07 &69.52$\pm$0.45 &\xmark \\
    \bottomrule
    \end{tabular}
  }
  \vspace{-1em}
\end{table}

\begin{figure}[t!]
    \begin{center}
        \includegraphics[width=.75\textwidth]{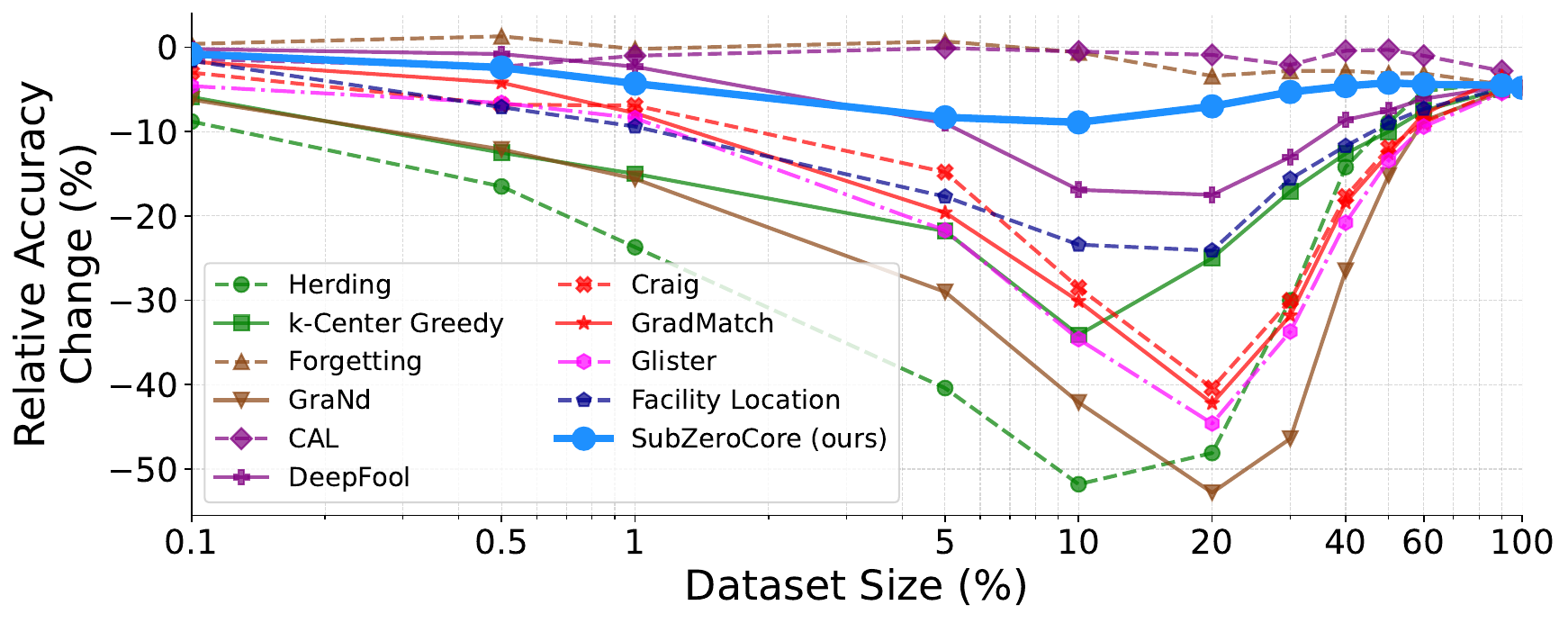}
        \caption{\label{fig:robust}
        Relative robustness of coreset selection methods on CIFAR-10 with 10\% corrupted labels. 
 SubZeroCore demonstrates strong robustness (among top-3 methods with CAL and Forgetting), even outperforming facility location, the method it builds upon.
        }
    \end{center}
    \vspace{-2em}
\end{figure}

\begin{wrapfigure}{r}{0.55\textwidth}
  \centering
  \vspace{-1em}
 \includegraphics[width=0.55\textwidth]{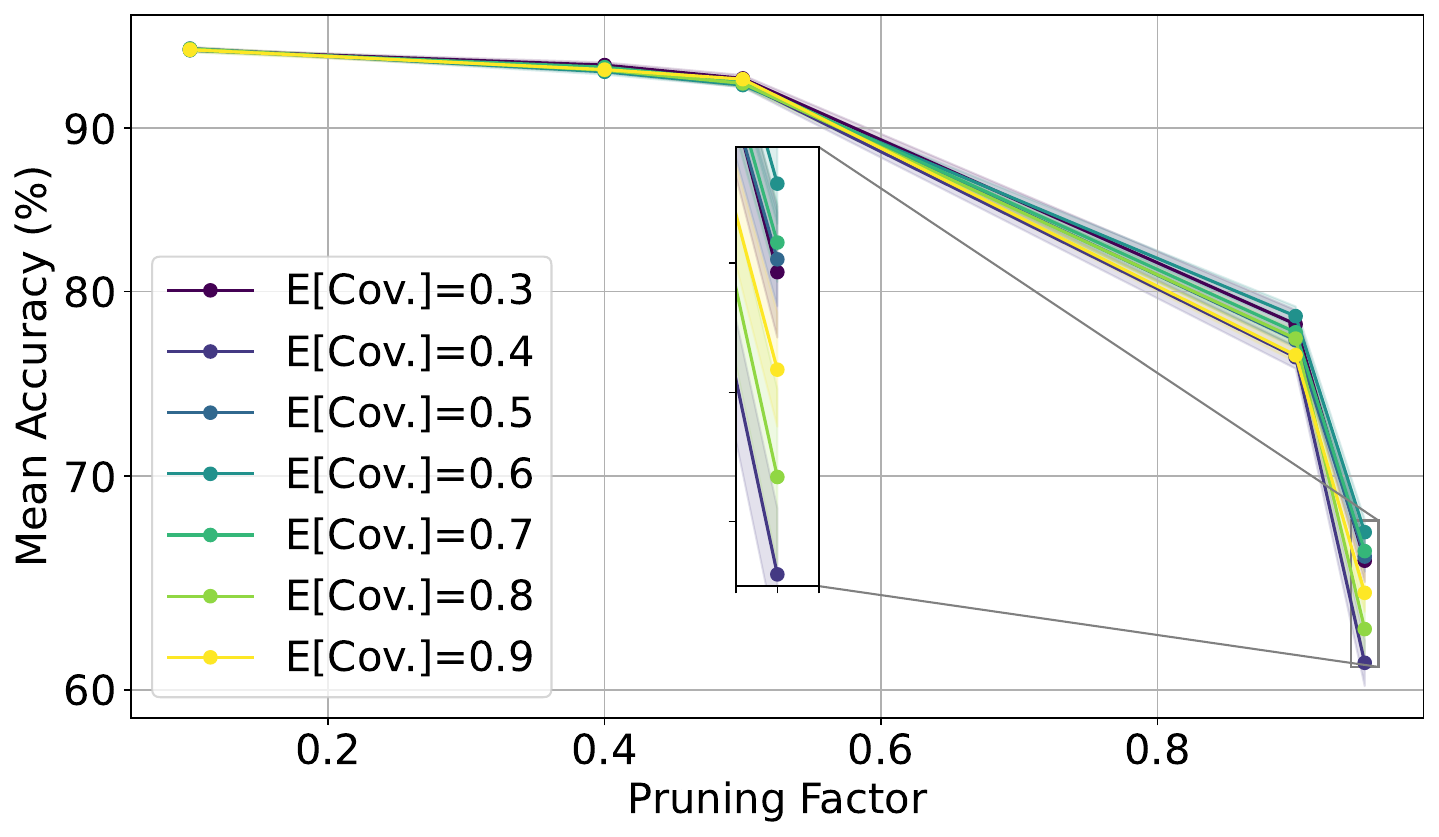}
 \caption{\label{fig:expectedCoverage} Expected coverage ($\gamma$) ablation on CIFAR-10. While for lower pruning ratios, the setting of $\gamma$ does not have a notable impact, it significantly influences the outcome for higher pruning ratios. We identify a target coverage of 0.6 as the best option.}
 \vspace{-1em}
\end{wrapfigure}

\subsection{Impact of Target Coverage}
Recall that SubZeroCore has only one hyperparameter, namely the desired coverage level $\gamma \in (0,1)$ for \autoref{eq:expected_coverage}.
We conduct an ablation study (see \autoref{fig:expectedCoverage}) by varying $\gamma$ and then measuring the final test accuracy under different pruning ratios. 
We observe that, for moderate or low pruning rates, SubZeroCore remains relatively insensitive to the exact choice of $\gamma$. 
However, at high pruning rates, different $\gamma$-values lead to significant gaps in final accuracy. 
Through this exploration, we find that a target coverage of $\gamma \approx 0.60$ offers the best trade-off between robust performance and insensitivity to pruning levels. 
Consequently, we adopted $\gamma=0.60$ in our reported CIFAR-10 and ImageNet-1K experiments.

\section{Limitations}
SubZeroCore may yield less meaningful estimates in the regime where \(\lvert\mathcal{T}\rvert\) is small, although our coverage derivation in \autoref{eq:expected_coverage} cleanly holds for sufficiently large datasets. 
Mathematically, the closed-form expression hinges on selecting \(\lvert\mathcal{S}\rvert\) subsets from a larger pool \(\lvert\mathcal{T}\rvert\).
When \(\lvert\mathcal{T}\rvert\) is only marginally bigger than \(\lvert\mathcal{S}\rvert\), the binomial coefficients \(\binom{\lvert\mathcal{T}\rvert-K}{\lvert\mathcal{S}\rvert}\) and \(\binom{\lvert\mathcal{T}\rvert}{\lvert\mathcal{S}\rvert}\) can be extremely sensitive to small changes in \(\lvert\mathcal{S}\rvert\) or \(K\). 
Consequently, small-sample effects can inflate (or deflate) the predicted coverage in ways that do not generalize outside the combinatorial assumptions underlying the derivation. 
Thus, if the dataset itself is tiny (e.g., tens or hundreds of samples), then the notion of “expected coverage” over all possible subsets becomes so discretized that it no longer provides a stable yardstick for coverage-driven coreset selection. 
We recommend a direct check of coverage in such low-data scenarios (though it remains questionable whether coreset selection is even necessary in extremely small datasets), rather than relying on the asymptotic-style expression in \autoref{eq:expected_coverage}.

\section{Related Work}
Coreset selection has been explored from multiple angles.
On the training-based front, various importance-scoring heuristics like the forgetting score \cite{toneva2018empirical}, AUM \cite{pleiss2020identifying}, and EL2N \cite{paul2021deep} estimate how much a training example influences model parameters or loss dynamics, then keep only those deemed most essential.
Other methods like GraNd \cite{paul2021deep} or GradMatch  \cite{killamsetty2021grad} exploit the gradients during training, while DeepFool \cite{ducoffe2018adversarial} or CAL \cite{margatina2021active} leverage an approximation of the decision boundary during training.
However, computing these metrics usually demands full or partial training rounds and can be computationally heavy. 
Regarding training-free methods, k-means clustering or greedy k-center have been proposed to directly achieve good coverage in feature space \cite{sener2017active, sorscher2022beyond}, but usually underperform if the embedded feature space is not trained on the full dataset like in our experiments.
Also, their sole focus is pure coverage, making it highly effective at covering the entire data space but also sensitive to outliers, as it will prioritize isolated points to reduce the worst-case distance.

Beyond coreset selection specifically, data subsets or proxy selection also appears in active learning, where approaches like BADGE \cite{ash2019deep} or BatchBALD \cite{kirsch2019batchbald} repeatedly query diverse, high-uncertainty examples to improve a model at each round. 
Although active learning shares the goal of sampling efficiently, it typically relies on sequential label querying and repeated model updates, which differ from our training-free, model-agnostic setting.
Another relevant line of research pertains to coreset constructions for \emph{classical clustering} problems (e.g., k-means), where theoretical guarantees can be derived through importance sampling or similar randomization strategies \cite{feldman2020core, cohen2025tight, bahmani2012scalable, caron2018deep}. These techniques, however, leverage the geometry of clustering objectives rather than classification or representation-learning signals, making them less adaptable to broad deep-learning tasks.

\section{Conclusion \& Future Work}
In this paper, we introduced SubZeroCore, a novel coreset selection method that elegantly unifies density and coverage into a single submodular optimization objective without requiring any training signals. 
Unlike existing training-based methods, SubZeroCore operates sufficiently in a purely geometric-based setting and significantly reduces computational overhead.
Moreover, we reduced the number of hyperparameters for the corset selection to one, whereas existing methods rely on good model-specific choices. 
Our theoretical analysis, supported by extensive experiments on CIFAR-10 and ImageNet-1K, demonstrates that SubZeroCore not only maintains competitive accuracy at lower pruning rates but also outperforms state-of-the-art results at high pruning rates. 
Moreover, we have shown that our density-based weighting scheme naturally provides robustness against label noise, making it suitable for real-world scenarios with potentially corrupted or noisy data.

In conclusion, SubZeroCore presents a meaningful step forward in making large-scale coreset selection more resource-efficient and environmentally sustainable. 
Future work includes extending the framework to dynamic data streams, further broadening its applicability.
Moreover, one could introduce a additional power on the weights to explicitly control between density and coverage.

\bibliographystyle{splncs04}
\bibliography{main}

\begin{thebibliography}{10}
\providecommand{\url}[1]{\texttt{#1}}
\providecommand{\urlprefix}{URL }
\providecommand{\doi}[1]{https://doi.org/#1}

\bibitem{abbas2024effective}
Abbas, A., Rusak, E., Tirumala, K., Brendel, W., Chaudhuri, K., Morcos, A.S.: Effective pruning of web-scale datasets based on complexity of concept clusters. arXiv preprint arXiv:2401.04578  (2024)

\bibitem{agarwal2005geometric}
Agarwal, P.K., Har-Peled, S., Varadarajan, K.R., et~al.: Geometric approximation via coresets. Combinatorial and computational geometry  \textbf{52}(1),  1--30 (2005)

\bibitem{ash2019deep}
Ash, J.T., Zhang, C., Krishnamurthy, A., Langford, J., Agarwal, A.: Deep batch active learning by diverse, uncertain gradient lower bounds. arXiv preprint arXiv:1906.03671  (2019)

\bibitem{bachem2017practical}
Bachem, O., Lucic, M., Krause, A.: Practical coreset constructions for machine learning. arXiv preprint arXiv:1703.06476  (2017)

\bibitem{bahmani2012scalable}
Bahmani, B., Moseley, B., Vattani, A., Kumar, R., Vassilvitskii, S.: Scalable k-means++. arXiv preprint arXiv:1203.6402  (2012)

\bibitem{bengio2019extreme}
Bengio, S., Dembczynski, K., Joachims, T., Kloft, M., Varma, M.: Extreme classification (dagstuhl seminar 18291). In: Dagstuhl Reports. vol.~8. Schloss Dagstuhl-Leibniz-Zentrum fuer Informatik (2019)

\bibitem{berczi2019facility}
B{\'e}rczi, K., B{\'e}rczi-Kov{\'a}cs, E.R., Lorincz, A., Milacski, Z.: Facility location functions are deep submodular functions  (2019)

\bibitem{bhalerao2024fine}
Bhalerao, M.M.: On Fine-Tuning Submodular Functions for Data Subset Selection. Master's thesis, University of Washington (2024)

\bibitem{borsos2020coresets}
Borsos, Z., Mutny, M., Krause, A.: Coresets via bilevel optimization for continual learning and streaming. NeurIPS  \textbf{33},  14879--14890 (2020)

\bibitem{caron2018deep}
Caron, M., Bojanowski, P., Joulin, A., Douze, M.: Deep clustering for unsupervised learning of visual features. In: ECCV. pp. 132--149 (2018)

\bibitem{chenunified}
Chen, Y., Wang, Y., Su, J., et~al.: Unified framework for coreset selection and dataset distillation by distribution matching. Available at SSRN  (8 2024), \url{https://ssrn.com/abstract_id=4935536}

\bibitem{cohen2025tight}
Cohen-Addad, V., Draganov, A., Russo, M., Saulpic, D., Schwiegelshohn, C.: A tight vc-dimension analysis of clustering coresets with applications. In: Proceedings of the 2025 Annual ACM-SIAM Symposium on Discrete Algorithms (SODA). pp. 4783--4808. SIAM (2025)

\bibitem{csiba2018importance}
Csiba, D., Richt{\'a}rik, P.: Importance sampling for minibatches. The Journal of Machine Learning Research  \textbf{19}(1),  962--982 (2018)

\bibitem{deng2009imagenet}
Deng, J., Dong, W., Socher, R., Li, L.J., Li, K., Fei-Fei, L.: Imagenet: A large-scale hierarchical image database. In: CVPR. pp. 248--255. Ieee (2009)

\bibitem{ding2023not}
Ding, Q., Liang, Z., Wang, L., Wang, Y., Yang, J.: Not all patches are equal: Hierarchical dataset condensation for single image super-resolution. IEEE Signal Processing Letters  (2023)

\bibitem{doucoreset}
Dou, J., Yu, C., Jiang, Y., Wang, Z., Fu, Q., Han, Y.: Coreset optimization by memory constraints, for memory constraints (10 2023)

\bibitem{ducoffe2018adversarial}
Ducoffe, M., Precioso, F.: Adversarial active learning for deep networks: a margin based approach. arXiv preprint arXiv:1802.09841  (2018)

\bibitem{feldman2020core}
Feldman, D.: Core-sets: Updated survey. Sampling techniques for supervised or unsupervised tasks pp. 23--44 (2020)

\bibitem{ganguli2022predictability}
Ganguli, D., Hernandez, D., Lovitt, L., Askell, A., Bai, Y., Chen, A., Conerly, T., Dassarma, N., Drain, D., Elhage, N., et~al.: Predictability and surprise in large generative models. In: 2022 ACM Conference on Fairness, Accountability, and Transparency (2022)

\bibitem{guo2022deepcore}
Guo, C., Zhao, B., Bai, Y.: Deepcore: A comprehensive library for coreset selection in deep learning. In: International Conference on Database and Expert Systems Applications. pp. 181--195. Springer (2022)

\bibitem{he2016deep}
He, K., Zhang, X., Ren, S., Sun, J.: Deep residual learning for image recognition. In: CVPR. pp. 770--778 (2016)

\bibitem{huang2024coreset}
Huang, Y., Yuan, X., Wang, H., Du, Y.: Coreset selection can accelerate quantum machine learning models with provable generalization. Physical Review Applied  \textbf{22}(1),  014074 (2024)

\bibitem{iyer2013fast}
Iyer, R., Jegelka, S., Bilmes, J.: Fast semidifferential-based submodular function optimization. In: ICML. pp. 855--863. PMLR (2013)

\bibitem{iyer2021submodular}
Iyer, R., Khargoankar, N., Bilmes, J., Asanani, H.: Submodular combinatorial information measures with applications in machine learning. In: Algorithmic Learning Theory. pp. 722--754. PMLR (2021)

\bibitem{iyer2013submodular}
Iyer, R.K., Bilmes, J.A.: Submodular optimization with submodular cover and submodular knapsack constraints. NeurIPS  \textbf{26} (2013)

\bibitem{karanam2022orient}
Karanam, A., Killamsetty, K., Kokel, H., Iyer, R.: Orient: Submodular mutual information measures for data subset selection under distribution shift. NeurIPS  \textbf{35},  31796--31808 (2022)

\bibitem{katharopoulos2018not}
Katharopoulos, A., Fleuret, F.: Not all samples are created equal: Deep learning with importance sampling. In: ICML. pp. 2525--2534. PMLR (2018)

\bibitem{khandel2024distillation}
Khandel, P., Yates, A., Varbanescu, A.L., De~Rijke, M., Pimentel, A.: Distillation vs. sampling for efficient training of learning to rank models. In: Proceedings of the 2024 ACM SIGIR International Conference on Theory of Information Retrieval. pp. 51--60 (2024)

\bibitem{killamsetty2021grad}
Killamsetty, K., Durga, S., Ramakrishnan, G., De, A., Iyer, R.: Grad-match: Gradient matching based data subset selection for efficient deep model training. In: ICML. pp. 5464--5474. PMLR (2021)

\bibitem{killamsetty2021glister}
Killamsetty, K., Sivasubramanian, D., Ramakrishnan, G., Iyer, R.: Glister: Generalization based data subset selection for efficient and robust learning. In: AAAI. vol.~35, pp. 8110--8118 (2021)

\bibitem{kirsch2019batchbald}
Kirsch, A., Van~Amersfoort, J., Gal, Y.: Batchbald: Efficient and diverse batch acquisition for deep bayesian active learning. NeurIPS  \textbf{32} (2019)

\bibitem{koh2017understanding}
Koh, P.W., Liang, P.: Understanding black-box predictions via influence functions. In: ICML. pp. 1885--1894. PMLR (2017)

\bibitem{kothawade2021submodular}
Kothawade, S., Kaushal, V., Ramakrishnan, G., Bilmes, J., Iyer, R.: Submodular mutual information for targeted data subset selection. arXiv preprint arXiv:2105.00043  (2021)

\bibitem{krizhevsky2009learning}
Krizhevsky, A., Hinton, G., et~al.: Learning multiple layers of features from tiny images  (2009)

\bibitem{laribi2024application}
Laribi, N., Gaceb, D., Touazi, F., Rezoug, A.: Application of dataset pruning and dynamic transfer learning on vision transformers for mgmt prediction on brain mri images. In: 2024 1st International Conference on Innovative and Intelligent Information Technologies (IC3IT). pp.~1--6. IEEE (2024)

\bibitem{lee2021acav100m}
Lee, S., Chung, J., Yu, Y., Kim, G., Breuel, T., Chechik, G., Song, Y.: Acav100m: Automatic curation of large-scale datasets for audio-visual video representation learning. In: CVPR. pp. 10274--10284 (2021)

\bibitem{lim2014lazy}
Lim, C.L., Moffat, A., Wirth, A.: Lazy and eager approaches for the set cover problem. In: Proceedings of the Thirty-Seventh Australasian Computer Science Conference-Volume 147. pp. 19--27 (2014)

\bibitem{lundberg2017unified}
Lundberg, S.M., Lee, S.I.: A unified approach to interpreting model predictions. NeurIPS  \textbf{30} (2017)

\bibitem{luo2024dual}
Luo, Y., Zhao, S., Wu, H., Lu, Z.: Dual-enhanced coreset selection with class-wise collaboration for online blurry class incremental learning. In: CVPR. pp. 23995--24004 (2024)

\bibitem{margatina2021active}
Margatina, K., Vernikos, G., Barrault, L., Aletras, N.: Active learning by acquiring contrastive examples. arXiv preprint arXiv:2109.03764  (2021)

\bibitem{marion2023less}
Marion, M., {\"U}st{\"u}n, A., Pozzobon, L., Wang, A., Fadaee, M., Hooker, S.: When less is more: Investigating data pruning for pretraining llms at scale. arXiv preprint arXiv:2309.04564  (2023)

\bibitem{mirzasoleiman2020coresets}
Mirzasoleiman, B., Bilmes, J., Leskovec, J.: Coresets for data-efficient training of machine learning models. In: ICML. pp. 6950--6960. PMLR (2020)

\bibitem{morgan2016geometric}
Morgan, F.: Geometric measure theory: a beginner's guide. Academic press (2016)

\bibitem{moser2022less}
Moser, B., Raue, F., Hees, J., Dengel, A.: Less is more: Proxy datasets in nas approaches. In: CVPR. pp. 1953--1961 (2022)

\bibitem{moser2024study}
Moser, B.B., Raue, F., Dengel, A.: A study in dataset pruning for image super-resolution. In: International Conference on Artificial Neural Networks. pp. 351--363. Springer (2024)

\bibitem{moser2024distill}
Moser, B.B., Raue, F., Nauen, T.C., Frolov, S., Dengel, A.: Distill the best, ignore the rest: Improving dataset distillation with loss-value-based pruning. arXiv preprint arXiv:2411.12115  (2024)

\bibitem{moser2025coreset}
Moser, B.B., Shanbhag, A.S., Frolov, S., Raue, F., Folz, J., Dengel, A.: A coreset selection of coreset selection literature: Introduction and recent advances. arXiv preprint arXiv:2505.17799  (2025)

\bibitem{na2021accelerating}
Na, B., Mok, J., Choe, H., Yoon, S.: Accelerating neural architecture search via proxy data. arXiv preprint arXiv:2106.04784  (2021)

\bibitem{naeem2020reliable}
Naeem, M.F., Oh, S.J., Uh, Y., Choi, Y., Yoo, J.: Reliable fidelity and diversity metrics for generative models. In: ICML. pp. 7176--7185. PMLR (2020)

\bibitem{nemhauser1978analysis}
Nemhauser, G.L., Wolsey, L.A., Fisher, M.L.: An analysis of approximations for maximizing submodular set functions—i. Mathematical programming  \textbf{14},  265--294 (1978)

\bibitem{nguyen2017variational}
Nguyen, C.V., Li, Y., Bui, T.D., Turner, R.E.: Variational continual learning. arXiv preprint arXiv:1710.10628  (2017)

\bibitem{nogueira2018stability}
Nogueira, S., Sechidis, K., Brown, G.: On the stability of feature selection algorithms. Journal of Machine Learning Research  \textbf{18}(174),  1--54 (2018)

\bibitem{paul2021deep}
Paul, M., Ganguli, S., Dziugaite, G.K.: Deep learning on a data diet: Finding important examples early in training. NeurIPS  \textbf{34},  20596--20607 (2021)

\bibitem{pleiss2020identifying}
Pleiss, G., Zhang, T., Elenberg, E., Weinberger, K.Q.: Identifying mislabeled data using the area under the margin ranking. NeurIPS  \textbf{33},  17044--17056 (2020)

\bibitem{qu2022performance}
Qu, F., Erfani, S.M., Usman, M.: Performance analysis of coreset selection for quantum implementation of k-means clustering algorithm. arXiv preprint arXiv:2206.07852  (2022)

\bibitem{ren2018learning}
Ren, M., Zeng, W., Yang, B., Urtasun, R.: Learning to reweight examples for robust deep learning. In: ICML. pp. 4334--4343. PMLR (2018)

\bibitem{sener2017active}
Sener, O., Savarese, S.: Active learning for convolutional neural networks: A core-set approach. arXiv preprint arXiv:1708.00489  (2017)

\bibitem{simonyan2014very}
Simonyan, K., Zisserman, A.: Very deep convolutional networks for large-scale image recognition. arXiv preprint arXiv:1409.1556  (2014)

\bibitem{sivasubramanian2024gradient}
Sivasubramanian, D., Nagalapatti, L., Iyer, R., Ramakrishnan, G.: Gradient coreset for federated learning. In: CVPR. pp. 2648--2657 (2024)

\bibitem{song2022adaptive}
Song, L., Zhang, J., Yang, T., Goto, M.: Adaptive ranking-based sample selection for weakly supervised class-imbalanced text classification. arXiv preprint arXiv:2210.03092  (2022)

\bibitem{sorscher2022beyond}
Sorscher, B., Geirhos, R., Shekhar, S., Ganguli, S., Morcos, A.: Beyond neural scaling laws: beating power law scaling via data pruning. NeurIPS  \textbf{35},  19523--19536 (2022)

\bibitem{svitkina2011submodular}
Svitkina, Z., Fleischer, L.: Submodular approximation: Sampling-based algorithms and lower bounds. SIAM Journal on Computing  \textbf{40}(6),  1715--1737 (2011)

\bibitem{swayamdipta2020dataset}
Swayamdipta, S., Schwartz, R., Lourie, N., Wang, Y., Hajishirzi, H., Smith, N.A., Choi, Y.: Dataset cartography: Mapping and diagnosing datasets with training dynamics. arXiv preprint arXiv:2009.10795  (2020)

\bibitem{szegedy2016rethinking}
Szegedy, C., Vanhoucke, V., Ioffe, S., Shlens, J., Wojna, Z.: Rethinking the inception architecture for computer vision. In: CVPR. pp. 2818--2826 (2016)

\bibitem{toneva2018empirical}
Toneva, M., Sordoni, A., Combes, R.T.d., Trischler, A., Bengio, Y., Gordon, G.J.: An empirical study of example forgetting during deep neural network learning. arXiv preprint arXiv:1812.05159  (2018)

\bibitem{wang2018dataset}
Wang, T., Zhu, J.Y., Torralba, A., Efros, A.A.: Dataset distillation. arXiv preprint arXiv:1811.10959  (2018)

\bibitem{wei2015submodularity}
Wei, K., Iyer, R., Bilmes, J.: Submodularity in data subset selection and active learning. In: ICML. PMLR (2015)

\bibitem{wei2014submodular}
Wei, K., Liu, Y., Kirchhoff, K., Bartels, C., Bilmes, J.: Submodular subset selection for large-scale speech training data. In: 2014 IEEE International Conference on Acoustics, Speech and Signal Processing (ICASSP). pp. 3311--3315. IEEE (2014)

\bibitem{welling2009herding}
Welling, M.: Herding dynamical weights to learn. In: ICML. pp. 1121--1128 (2009)

\bibitem{xiao2025rethinking}
Xiao, L., Liu, S., He, Y., Wang, X.: Rethinking large-scale dataset compression: Shifting focus from labels to images. arXiv preprint arXiv:2502.06434  (2025)

\bibitem{xue2023near}
Xue, Y., Chen, X., Li, T., Jiang, S.H.C.: Near-optimal quantum coreset construction algorithms for clustering. In: ICML. pp. 38881--38912. PMLR (2023)

\bibitem{yang2024data}
Yang, R., Su, L.: Data-efficient trajectory prediction via coreset selection. arXiv preprint arXiv:2409.17385  (2024)

\bibitem{yao2023asp}
Yao, P., Liao, C., Jia, J., Tan, J., Chen, B., Song, C., Zhang, D.: Asp: Automatic selection of proxy dataset for efficient automl. arXiv preprint arXiv:2310.11478  (2023)

\bibitem{yoon2021online}
Yoon, J., Madaan, D., Yang, E., Hwang, S.J.: Online coreset selection for rehearsal-based continual learning. arXiv preprint arXiv:2106.01085  (2021)

\bibitem{zagoruyko2016wide}
Zagoruyko, S., Komodakis, N.: Wide residual networks. arXiv preprint arXiv:1605.07146  (2016)

\bibitem{zhang2021understanding}
Zhang, C., Bengio, S., Hardt, M., Recht, B., Vinyals, O.: Understanding deep learning (still) requires rethinking generalization. Communications of the ACM  \textbf{64}(3),  107--115 (2021)

\bibitem{zheng2022coverage}
Zheng, H., Liu, R., Lai, F., Prakash, A.: Coverage-centric coreset selection for high pruning rates. ICLR  (2022)

\end{thebibliography}

\appendix
\section{Connecting Submodularity to Generalization}
\label{appendix:submod_gen}

In this section, we elaborate on how submodular optimization, specifically through maximizing the facility location function, is inherently connected to generalization performance.

Intuitively, the submodular facility location function encourages the selection of representative and diverse subsets by maximizing the similarity coverage of the entire dataset. Such coverage implies that selected points efficiently span the data manifold, reducing redundancy and increasing representativeness. To formalize this intuition, consider the concept of empirical Rademacher complexity, which quantifies the capacity of a function class to fit random labels and thus directly relates to generalization performance:

\begin{lemma}[Informal connection to generalization]
Given a data distribution $P$, and a submodular coverage function $f(\mathcal{S})$ optimized by greedy submodular maximization to achieve at least a $(1-1/e)$ approximation, the resulting coreset $\mathcal{S}_{\text{greedy}}$ provides a reduced empirical Rademacher complexity compared to randomly selected subsets of the same size. Consequently, this implies improved expected generalization bounds. Formally,
\begin{equation}
\mathbb{E}_{\mathbf{x}, y \sim P}[\mathcal{L}(\mathbf{x},y; \mathcal{S}_{\text{greedy}})] \leq \mathbb{E}_{\mathbf{x}, y \sim P}[\mathcal{L}(\mathbf{x},y; \mathcal{S}_{\text{random}})] - \Delta,
\end{equation}
where $\Delta \geq 0$ quantifies the expected gain in generalization due to better coverage.
\end{lemma}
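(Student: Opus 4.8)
The plan is to chain three standard ingredients: a Rademacher generalization bound, a distortion bound that turns the facility-location value into a handle on how closely training on $\mathcal{S}$ tracks training on the full set $\mathcal{T}$ (hence the population risk under $P$), and the $(1-1/e)$ guarantee of \textit{Nemhauser et al.}~\cite{nemhauser1978analysis}, which certifies that greedy coverage dominates the coverage of a uniformly random subset of the same size. Throughout, fix a hypothesis class $\mathcal{H}$, an $L$-Lipschitz loss bounded in $[0,1]$, and assume the learner performs (approximate) empirical risk minimization on whichever coreset of size $s$ it is handed; this is a sketch that makes these idealizations explicit.

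First, symmetrization gives, for any coreset $\mathcal{S}$ of size $s$, with probability at least $1-\delta$,
\begin{equation}
\mathbb{E}_{\mathbf{x},y\sim P}\bigl[\mathcal{L}(\mathbf{x},y;\mathcal{S})\bigr]
\;\leq\;
\widehat{\mathcal{L}}_{\mathcal{S}} \;+\; 2\,\widehat{\mathcal{R}}_{\mathcal{S}}(\mathcal{H}) \;+\; c_0\sqrt{\tfrac{\log(1/\delta)}{s}},
\end{equation}
with $\widehat{\mathcal{L}}_{\mathcal{S}}$ the empirical risk and $\widehat{\mathcal{R}}_{\mathcal{S}}(\mathcal{H})$ the data-dependent empirical Rademacher complexity on the realized coreset. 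Second, I would convert the facility-location value into a distortion term: writing $c(\mathcal{S})=\tfrac{1}{|\mathcal{T}|}f_{\text{FL}}(\mathcal{S})\in[0,1]$ for the normalized coverage and $\pi_{\mathcal{S}}(\mathbf{x})\in\mathcal{S}$ for the representative of $\mathbf{x}\in\mathcal{T}$ attaining the maximum in $f_{\text{FL}}$, a Lipschitz-transfer argument along the similarity embedding yields
\begin{equation}
\bigl|\widehat{\mathcal{L}}_{\mathcal{T}} - \widehat{\mathcal{L}}_{\mathcal{S}}\bigr|
\;\leq\;
\frac{L}{|\mathcal{T}|}\sum_{\mathbf{x}\in\mathcal{T}}\bigl(1 - \operatorname{sim}(\mathbf{x},\pi_{\mathcal{S}}(\mathbf{x}))\bigr)
\;=\; L\bigl(1 - c(\mathcal{S})\bigr)
\;=:\; \mathrm{dist}(\mathcal{S}),
\end{equation}
together with an upper bound $\mathcal{R}_s(c(\mathcal{S}))$ on $\widehat{\mathcal{R}}_{\mathcal{S}}(\mathcal{H})$ that is nonincreasing in $c(\mathcal{S})$, obtained by noting that a high-coverage coreset caps the effective covering number at the resolution set by $K$.

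Third, invoke greedy optimality: by \cite{nemhauser1978analysis}, $f_{\text{FL}}(\mathcal{S}_{\mathrm{greedy}})\geq(1-1/e)\,f_{\text{FL}}(\mathcal{S}^{*})\geq(1-1/e)\,\mathbb{E}_{\mathcal{S}_{\mathrm{random}}}[f_{\text{FL}}(\mathcal{S}_{\mathrm{random}})]$, the last step because the maximum over size-$s$ subsets is at least their uniform average. Hence, in expectation, $c(\mathcal{S}_{\mathrm{greedy}})\geq c(\mathcal{S}_{\mathrm{random}})$, so both $\mathrm{dist}(\cdot)$ and the complexity bound $\mathcal{R}_s(c(\cdot))$ are no larger for the greedy coreset. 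Combining the three ingredients, the population risk of the model trained on $\mathcal{S}$ is bounded by $\mathrm{dist}(\mathcal{S})+2\mathcal{R}_s(c(\mathcal{S}))$ up to terms identical for every size-$s$ coreset (the confidence term, and, using ERM on $\mathcal{S}$, a vanishing empirical-risk term); taking expectations over the random draw yields the claimed inequality with
\begin{equation}
\begin{split}
\Delta = {}& \mathbb{E}_{\mathcal{S}_{\mathrm{random}}}\bigl[\mathrm{dist}(\mathcal{S}_{\mathrm{random}}) + 2\mathcal{R}_s(c(\mathcal{S}_{\mathrm{random}}))\bigr] \\
& {}- \bigl(\mathrm{dist}(\mathcal{S}_{\mathrm{greedy}}) + 2\mathcal{R}_s(c(\mathcal{S}_{\mathrm{greedy}}))\bigr) \;\geq\; 0,
\end{split}
\end{equation}
which is nonnegative by the monotonicity just established.

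The hard part --- and the reason the statement is flagged as informal --- is the second step: the empirical Rademacher complexity is a property of $\mathcal{H}$ on the realized sample, not literally a function of $f_{\text{FL}}(\mathcal{S})$, so the implication ``more coverage $\Rightarrow$ smaller $\widehat{\mathcal{R}}_{\mathcal{S}}$'' only holds under structural hypotheses tying the selection embedding to $\mathcal{H}$ (uniform Lipschitzness of every $h\in\mathcal{H}$ in that embedding, a bounded domain, and a benign learning rule). One must also control that $\mathcal{S}_{\mathrm{greedy}}$ is data-dependent, which breaks the i.i.d.\ premise behind symmetrization and would call for a stability or sample-splitting refinement. In the informal regime all such slack is absorbed into $\Delta$, and $\Delta\geq 0$ is asserted only under these idealized conditions; an analogous argument applies with $f_{\text{SubZeroCore}}$ in place of $f_{\text{FL}}$ (now with density-weighted similarities), since it is submodular and monotone as well.
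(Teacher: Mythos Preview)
The paper does not actually prove this lemma: after the statement it simply remarks that ``a rigorous theoretical derivation of $\Delta$ depends on specific assumptions about the data manifold and similarity measures,'' and leaves it as a conceptual linkage. Your sketch therefore goes well beyond what the paper offers, and the overall architecture (Rademacher bound $+$ coverage-controlled distortion $+$ greedy guarantee) is a reasonable way to try to formalize the intuition. You are also appropriately candid about the structural assumptions needed for the second step.

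There is, however, a concrete logical gap in your third step. From Nemhauser you obtain
\[
c(\mathcal{S}_{\mathrm{greedy}})\;\ge\;(1-1/e)\,c(\mathcal{S}^{*})\;\ge\;(1-1/e)\,\mathbb{E}\bigl[c(\mathcal{S}_{\mathrm{random}})\bigr],
\]
and then assert ``Hence, in expectation, $c(\mathcal{S}_{\mathrm{greedy}})\geq c(\mathcal{S}_{\mathrm{random}})$.'' That inference does not follow: the $(1-1/e)$ factor is still present, so the chain only yields $c(\mathcal{S}_{\mathrm{greedy}})\geq 0.63\,\mathbb{E}[c(\mathcal{S}_{\mathrm{random}})]$, which is too weak to force $\mathrm{dist}(\mathcal{S}_{\mathrm{greedy}})\leq \mathbb{E}[\mathrm{dist}(\mathcal{S}_{\mathrm{random}})]$ or the analogous inequality for $\mathcal{R}_s(c(\cdot))$. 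To close this you would need an independent argument that random coverage is bounded away from optimal by at least a factor $(1-1/e)$, which is not generally true and certainly is not supplied by the approximation guarantee. A second, structural issue is that even if the bounds satisfy $B(\mathcal{S}_{\mathrm{greedy}})\leq \mathbb{E}[B(\mathcal{S}_{\mathrm{random}})]$, this compares \emph{upper bounds} on the population risk, not the risks themselves; deducing the stated inequality on $\mathbb{E}_{\mathbf{x},y\sim P}[\mathcal{L}]$ from an ordering of upper bounds is not valid without further argument. You acknowledge this by ``absorbing slack into $\Delta$,'' but that move effectively redefines $\Delta$ as a difference of bounds rather than of risks, which is not what the lemma claims.
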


While a rigorous theoretical derivation of $\Delta$ depends on specific assumptions about the data manifold and similarity measures, the conceptual linkage established here supports the observed empirical improvements in generalization for SubZeroCore.

\section{Density-Weighted Submodularity and Robustness}
\label{appendix:density_robust}

This section discusses the theoretical reasoning behind why incorporating density weighting within the submodular selection improves robustness to noise and corrupted labels.

The key theoretical intuition is that emphasizing density inherently focuses the coreset selection on regions of the data manifold with greater local consensus. Points residing in densely populated regions are more likely to reflect stable underlying decision boundaries, making them robust to label perturbations and noise.

\begin{lemma}[Informal connection to robustness]
Given a density-weighted submodular function defined as
\begin{equation}
f_{\text{SubZeroCore}}(\mathcal{S}) = \sum_{i \in \mathcal{T}} \max_{j \in \mathcal{S}} \bigl(s_j \cdot \text{sim}(\mathbf{x}_i,\mathbf{x}_j)\bigr),
\end{equation}
where $s_j$ are density-based weights favoring dense regions, the resulting selected coreset  exhibits increased robustness to mislabeled data compared to subsets selected without density weighting. Specifically, the probability that an outlier or noisy label significantly influences the subset decreases substantially:
\begin{equation}
P(\mathbf{x}_{\text{outlier}} \in \mathcal{S}_{\text{greedy}}) \leq P(\mathbf{x}_{\text{outlier}} \in \mathcal{S}_{\text{non-density}}),
\end{equation}
where equality only holds in the trivial case of uniformly dense data.
\end{lemma}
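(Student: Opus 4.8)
The plan is to reduce the probabilistic inequality to a deterministic \emph{marginal-gain domination} between the two greedy runs and then average over a corruption model. First I would fix an arbitrary partially built coreset $\mathcal{S}$ and write the marginal gain of a candidate $j \notin \mathcal{S}$ in closed form: for plain facility location, $\Delta f_{\text{FL}}(j \mid \mathcal{S}) = \sum_{\mathbf{x}_i \in \mathcal{T}} \max\bigl(0,\ \operatorname{sim}(\mathbf{x}_i,\mathbf{x}_j) - \max_{k \in \mathcal{S}} \operatorname{sim}(\mathbf{x}_i,\mathbf{x}_k)\bigr)$, and for SubZeroCore the analogous expression with each similarity replaced by $s_\bullet \cdot \operatorname{sim}(\cdot,\cdot)$ as in \autoref{eq:fl}. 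Since $0 < s_j \le 1$ with $s_j$ strictly below $1$ exactly when $r_j = \operatorname{NND}_K(\mathbf{x}_j)$ deviates from $\mu$, an outlier $\mathbf{x}_o$ — which by definition lives in a sparse region, so $r_o \gg \mu$ and $s_o$ is tiny (\autoref{lemma_1}) — has its SubZeroCore marginal gain bounded by $\Delta f_{\text{SubZeroCore}}(o \mid \mathcal{S}) \le s_o \sum_i \operatorname{sim}(\mathbf{x}_i,\mathbf{x}_o) = s_o\, f_{\text{FL}}(\{o\})$, itself already small because $\mathbf{x}_o$ is far from the bulk of $\mathcal{T}$. In particular, the ``self-coverage'' term that makes outliers attractive to plain facility location once the dense regions are served — the jump of $\max_{k \in \mathcal{S}} \operatorname{sim}(\mathbf{x}_o,\mathbf{x}_k)$ toward $\operatorname{sim}(\mathbf{x}_o,\mathbf{x}_o) = 1$ — is capped at $s_o$ under SubZeroCore.

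Next I would show this cap forces $\mathbf{x}_o$ out of the greedy trajectory. The key sub-lemma is: as long as the budget $\lvert\mathcal{S}\rvert = (1-\alpha)\lvert\mathcal{T}\rvert$ has not saturated the averagely-dense regions — which, under concentration of the radii $\{r_i\}$ used to define the $s_i$, holds at the pruning ratios of interest — at every greedy step there is an unselected $\mathbf{x}_p$ with $s_p \approx 1$ whose marginal gain exceeds $s_o\, f_{\text{FL}}(\{o\})$, so the greedy $\arg\max$ never equals $o$; hence $P(\mathbf{x}_o \in \mathcal{S}_{\text{greedy}}) = 0$ for SubZeroCore in that regime, whereas plain facility location admits configurations (dense regions served, $\mathbf{x}_o$ still uncovered) in which $o$ \emph{is} the $\arg\max$, giving strictly positive probability. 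I would then make the probability meaningful by averaging over the randomness of the corruption model (a uniformly random $\eta$-fraction of $\mathcal{T}$ relabelled, with ``outlier'' denoting a point pushed into a sparse neighborhood) together with any tie-breaking randomness in greedy: the deterministic domination yields, instance by instance, $\mathbf{1}\{\mathbf{x}_o \in \mathcal{S}_{\text{greedy}}^{\text{SZ}}\} \le \mathbf{1}\{\mathbf{x}_o \in \mathcal{S}_{\text{greedy}}^{\text{FL}}\}$, and taking expectations gives the claimed inequality. Equality holds precisely when $\sigma = 0$, i.e.\ every $r_i = \mu$ and every $s_i = 1$, so the two objectives literally coincide — the ``uniformly dense'' case.

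The main obstacle is not the inequality in spirit but its formalization on two fronts. The first is pinning down the \emph{probability space}: the lemma as stated carries no intrinsic randomness, so I must be explicit that the expectation is over the noise realization (and optionally tie-breaking), after which the real content is a bound on the measure of configurations where facility location picks a sparse point but SubZeroCore does not — and that step needs a quantitative ``dense regions are not saturated'' statement, which an honest proof would back by either a separation assumption on the embedding geometry or an explicit concentration hypothesis on $\{r_i\}$. The second is that the per-step domination is genuinely \emph{stepwise}, not a global coupling: once the two runs diverge they carry different partial coresets, so I would phrase the argument as ``at no step along the SubZeroCore run does $o$ become the $\arg\max$'' — a property of that single run — rather than attempting a sample-path coupling of the two runs, which avoids the trajectory-mismatch pitfall. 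Since the surrounding statement is explicitly labelled \emph{informal}, I would present the geometric/concentration conditions as stated hypotheses rather than discharge them in full, matching the rigor level of the appendix.
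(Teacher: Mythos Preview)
The paper does not supply a proof for this lemma at all. It is stated under the header ``Informal connection to robustness'' and is sandwiched between one paragraph of intuition --- that ``emphasizing density inherently focuses the coreset selection on regions of the data manifold with greater local consensus'' and hence on points reflecting ``stable underlying decision boundaries'' --- and one sentence deferring to the experiments (``This theoretical viewpoint aligns well with empirical observations\ldots''). There is no \texttt{proof} environment, no computation, and no attempt to specify the probability space over which $P(\cdot)$ is taken.

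Your proposal is therefore not an alternative route to the paper's argument but a substantial upgrade: you supply an actual mechanism where the paper offers only a heuristic. The marginal-gain comparison is the right lever; the bound $\Delta f_{\text{SubZeroCore}}(o\mid\mathcal{S}) \le s_o\, f_{\text{FL}}(\{o\})$ is correct for nonnegative similarities; the self-coverage observation pinpoints exactly \emph{why} plain facility location eventually reaches for isolated points once the bulk is served while SubZeroCore caps that incentive at $s_o$; and your identification of the two soft spots --- the undefined probability space and the need for a ``dense mass not yet saturated'' hypothesis --- is precisely what a careful reader would flag. Your equality case ($\sigma=0 \Rightarrow s_i\equiv 1$, so the two objectives coincide) matches the paper's ``uniformly dense data'' clause. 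In short, the paper treats the lemma as a labelled intuition backed by \autoref{fig:robust}; you treat it as a claim to be argued, and your sketch is sound at the informal level the statement itself invites.
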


This theoretical viewpoint aligns well with empirical observations in our experiments, providing a conceptual rationale for the robustness benefits observed with SubZeroCore.

\section{Numerical Inversion of Coverage}
To efficiently determine the optimal neighborhood size $K$ given a desired coverage level $\gamma$, we implemented a numerical inversion.
Basically, it ....
Below we provide the corresponding Python implementation directly extracted from our codebase:

\begin{verbatim}
def find_k_for_coverage(M, N, c):
    k = 0
    coverage = 0.0
    numerator = 1.0
    denominator = 1.0
    while coverage < c and k < N - M - 1:
        k += 1
        numerator *= (N - M - k)
        denominator *= (N - k)
        coverage = 1.0 - numerator / denominator
    return k
\end{verbatim}

\newpage
\section{SubZeroCore Algorithm}
\begin{algorithm}[ht!]
\caption{SubZeroCore Selection Procedure}
\label{alg:subzerocore}
\begin{algorithmic}[1]
\STATE \textbf{Input:} Dataset $\mathcal{T}=\{(\mathbf{x}_i,y_i)\}_{i=1}^{N}$, pruning ratio $\alpha \in (0,1)$, coverage target $\gamma \in (0,1)$, similarity measure $\text{sim}(\cdot,\cdot)$
\STATE \textbf{Output:} Coreset $\mathcal{S}$, with $|\mathcal{S}| = (1-\alpha)|\mathcal{T}|$
\vspace{0.2em}
\STATE \textit{// Compute subset size:}
\STATE $s \gets (1-\alpha)\cdot|\mathcal{T}|$
\STATE \textit{// Determine optimal $K$ via numerical inversion (given coverage $\gamma$):}
\STATE $K \gets \min \biggl\{K \in \mathbb{N} \;\bigg|\; 1 - \gamma \leq \prod_{k=0}^{K} \frac{\left(|\mathcal{T}|- s - k\right)}{|\mathcal{T}| - k} \biggr\}$

\STATE \textit{// For each $\mathbf{x}_i \in \mathcal{T}$, compute radius:} 
\STATE $r_i \gets \text{NND}_K(\mathbf{x}_i)$
\STATE \textit{// Compute empirical mean and standard deviation of radii:}

\STATE $\mu \gets \frac{1}{|\mathcal{T}|}\sum_{i=1}^{|\mathcal{T}|} r_i,\quad \sigma \gets \sqrt{\frac{1}{|\mathcal{T}|}\sum_{i=1}^{|\mathcal{T}|}(r_i - \mu)^2}$

\STATE \textit{// For each $\mathbf{x}_i \in \mathcal{T}$, compute density-based weights:}
\STATE $s_i \gets \exp\biggl(-\frac{(r_i - \mu)^2}{2\sigma^2}\biggr)$
\STATE
\STATE \textit{// Greedily select the coreset by maximizing the weighted facility location:}
\STATE \textit{// Initialize} 
\STATE $\mathcal{S} \gets \emptyset$
\WHILE{$|\mathcal{S}| < s$}
\STATE \textit{// Select next element according to:}
\STATE $\mathbf{x}^* \gets \argmax_{\mathbf{x}_j \in \mathcal{T}\setminus\mathcal{S}} \sum_{\mathbf{x}_i \in \mathcal{T}}\max_{\mathbf{x}_k \in \mathcal{S}\cup\{\mathbf{x}_j\}} s_k \cdot \text{sim}(\mathbf{x}_i, \mathbf{x}_k)$
\STATE \textit{// Update: }
\STATE $\mathcal{S} \gets \mathcal{S}\cup\{\mathbf{x}^*\}$
\ENDWHILE
\STATE \textbf{return} $\mathcal{S}$
\end{algorithmic}
\end{algorithm}

\section{Complexity Analysis of SubZeroCore}

We provide a comprehensive runtime and space complexity analysis for the SubZeroCore algorithm. Below, let:
\begin{itemize}
    \item $N$ denote the total number of samples in the dataset,
    \item $C$ denote the number of classes,
    \item $N_C \approx \frac{N}{C}$ denote the average number of samples per class,
    \item $d$ denote the dimensionality of the embeddings (e.g., 2048 for Inception v3),
    \item $M$ denote the coreset size per class (target subset size).
\end{itemize}

\subsection{Runtime Complexity}

SubZeroCore has four main computational steps:

\paragraph{1. Embedding Extraction.} Each embedding is computed via a single forward pass through a pretrained model, resulting in complexity:
\[
\mathcal{O}(N \cdot d).
\]

\paragraph{2. Numerical Inversion (Finding $K$ for Coverage).}
The numerical inversion (implemented by a simple linear search in practice) has negligible complexity as $K \ll N_C$:
\[
\mathcal{O}(K).
\]

\paragraph{3. Computing Radii and Density Scores.}
This involves pairwise distance calculations within each class:
\[
\mathcal{O}(N_C^2 \cdot d).
\]
Computing density scores (mean and standard deviation) adds only a linear term, thus the complexity is dominated by pairwise distance calculation.

\paragraph{4. Greedy Facility Location Selection.}
The greedy selection iterates $M$ times, with each iteration recomputing marginal gains for $N_C$ candidates:
\[
\mathcal{O}(M \cdot N_C^2).
\]

Summarizing all the steps above, the runtime complexity per class is:
\[
\mathcal{O}(N_C^2 \cdot d + M \cdot N_C^2) \approx \mathcal{O}(N_C^2 \cdot d),
\]
as typically, $M \ll d$. Thus, global complexity (over all classes) is approximately:
\[
\mathcal{O}\left(\frac{N^2 \cdot d}{C}\right).
\]

\subsection{Space Complexity}

SubZeroCore requires storing embeddings and pairwise similarity matrices:
\begin{itemize}
    \item Embedding storage: $\mathcal{O}(N \cdot d)$,
    \item Pairwise similarity matrix storage per class: $\mathcal{O}(N_C^2)$.
\end{itemize}
Thus, the overall memory complexity per class is dominated by the similarity matrix:
\[
\mathcal{O}(N_C^2).
\]

\subsection{When Training-Signal-Based Methods Can Have Lower Complexity}

While SubZeroCore benefits significantly from not requiring any model training, there exist specific scenarios where training-signal-based coreset selection methods may exhibit lower computational complexity:

\begin{itemize}
\item \textbf{Pre-trained Models or Available Gradients:} If the embedding model or gradient signals are already computed as part of a previous training phase or available from prior runs, methods utilizing these signals (e.g., gradient-based selection or forgetting-based methods) can leverage existing computations to significantly reduce incremental complexity. In these cases, complexity reduces to near-linear in the dataset size.
\item \textbf{Partial or Incremental Training:} Methods that rely on incremental updates of model parameters, rather than full model retraining, can substantially reduce computational overhead compared to the quadratic complexity involved in calculating pairwise distances or similarities used in SubZeroCore.

\item \textbf{Small Model Architectures:} For simpler or highly optimized models where obtaining training signals (such as gradients or loss values) can be efficiently computed, training-based methods may become less computationally demanding compared to the quadratic scaling of SubZeroCore.
\end{itemize}

Therefore, SubZeroCore is generally preferable in large-scale, training-free scenarios or settings where training signals are expensive or unavailable. In contrast, training-signal-based methods could become favorable under conditions outlined above.

\section{Considerations for Embedding Networks in Specialized Domains}

While SubZeroCore fundamentally operates as a training-free coreset selection method, its effectiveness hinges on the quality and suitability of the underlying embedding representation. 
Throughout our experiments on standard datasets such as CIFAR-10 and ImageNet-1K, we observed negligible performance differences across widely used pre-trained feature extractors like ResNet-18, InceptionNetV3, or VGG. 
This consistency suggests that SubZeroCore is mainly agnostic to these popular embedding models.

However, for highly specialized datasets or unique domain-specific applications (i.e., medical imaging, remote sensing, or multi-modal data), the choice of embedding network may significantly influence the quality of the selected coreset. 
Such specialized datasets often present distinct characteristics, such as unique feature distributions, intricate semantic relationships, or uncommon modalities that generic pre-trained embeddings may not adequately capture. 
In these cases, training a specialized embedding network tailored explicitly for the target domain or task could become necessary to achieve optimal coreset selection performance.

\section{Hardware \& Software Details}
All experiments were conducted on a workstation equipped with an NVIDIA RTX A6000 GPU (48GB VRAM) using PyTorch 1.10.1 with torchvision 0.11.2 for DeepCore.

\section{Datasets}

We conduct experiments on two widely adopted image classification datasets: CIFAR-10 and ImageNet-1K. Both datasets have the following properties.

\begin{itemize}
\item \textbf{CIFAR-10} consists of 50,000 training and 10,000 test images across 10 classes. Each image has a resolution of $32 \times 32$ pixels. The dataset provides a balanced, compact benchmark for testing coreset selection performance under high pruning ratios.

\item \textbf{ImageNet-1K} is a large-scale dataset with 1,281,167 training and 50,000 validation images from 1,000 object categories. Following standard coreset evaluation practices, we resize all images to $224 \times 224$ resolution for training and testing unless otherwise noted.
\end{itemize}

\section{Additional Cross-Architecture Evaluations}

\begin{table*}[h!]

  \centering
  \caption{\label{tab:cifar_vgg}Coreset selection performances on CIFAR-10 with five randomly initialized VGG-16 models. The best results are marked in bold.}
  \resizebox{\linewidth}{!}{
 \footnotesize{
 \setlength{\tabcolsep}{2pt}
\begin{tabular}{ccccccccccccc}

Fraction $(1-\alpha)$ & 0.1\%       & 0.5\%       & 1\%       & 5\%       & 10\%      & 20\%      & 30\%      & 40\%      & 50\%      & 60\%      & 90\%      & 100\%        \\ \cmidrule(lr){1-13}
Herding  & 
11.6\hspace{0.02em}$\pm$\hspace{0.02em}1.2 & 
12.7\hspace{0.02em}$\pm$\hspace{0.02em}0.6 & 
16.0\hspace{0.02em}$\pm$\hspace{0.02em}3.9 & 
54.1\hspace{0.02em}$\pm$\hspace{0.02em}3.1 & 
66.0\hspace{0.02em}$\pm$\hspace{0.02em}3.5 & 
71.5\hspace{0.02em}$\pm$\hspace{0.02em}1.5 & 
78.3\hspace{0.02em}$\pm$\hspace{0.02em}1.0 & 
80.9\hspace{0.02em}$\pm$\hspace{0.02em}0.5 & 
85.7\hspace{0.02em}$\pm$\hspace{0.02em}1.5 & 
88.2\hspace{0.02em}$\pm$\hspace{0.02em}0.7 & 
93.2\hspace{0.02em}$\pm$\hspace{0.02em}0.3 & 
94.3\hspace{0.02em}$\pm$\hspace{0.02em}0.0 \\

k-Center Greedy & 
12.3\hspace{0.02em}$\pm$\hspace{0.02em}1.0 & 
14.5\hspace{0.02em}$\pm$\hspace{0.02em}2.9 & 
14.7\hspace{0.02em}$\pm$\hspace{0.02em}1.3 & 
38.7\hspace{0.02em}$\pm$\hspace{0.02em}12.0 & 
75.6\hspace{0.02em}$\pm$\hspace{0.02em}1.3 & 
85.5\hspace{0.02em}$\pm$\hspace{0.02em}0.3 & 
89.0\hspace{0.02em}$\pm$\hspace{0.02em}0.4 & 
91.0\hspace{0.02em}$\pm$\hspace{0.02em}0.2 & 
91.8\hspace{0.02em}$\pm$\hspace{0.02em}0.3 & 
92.7\hspace{0.02em}$\pm$\hspace{0.02em}0.2 & 
94.1\hspace{0.02em}$\pm$\hspace{0.02em}0.1 &
94.3\hspace{0.02em}$\pm$\hspace{0.02em}0.0 \\ \cmidrule(lr){1-13}

Forgetting  & 
12.5\hspace{0.02em}$\pm$\hspace{0.02em}0.8 & 
13.8\hspace{0.02em}$\pm$\hspace{0.02em}2.8 & 
\textbf{25.6\hspace{0.02em}$\pm$\hspace{0.02em}4.3} & 
55.9\hspace{0.02em}$\pm$\hspace{0.02em}1.3 & 
75.2\hspace{0.02em}$\pm$\hspace{0.02em}1.4 & 
86.2\hspace{0.02em}$\pm$\hspace{0.02em}0.2 & 
89.0\hspace{0.02em}$\pm$\hspace{0.02em}0.2 & 
91.3\hspace{0.02em}$\pm$\hspace{0.02em}0.2 & 
91.9\hspace{0.02em}$\pm$\hspace{0.02em}0.2 & 
92.8\hspace{0.02em}$\pm$\hspace{0.02em}0.0 & 
94.1\hspace{0.02em}$\pm$\hspace{0.02em}0.2 &
94.3\hspace{0.02em}$\pm$\hspace{0.02em}0.0 \\
GraNd   & 
11.7\hspace{0.02em}$\pm$\hspace{0.02em}0.6 & 
12.5\hspace{0.02em}$\pm$\hspace{0.02em}0.0 & 
16.6\hspace{0.02em}$\pm$\hspace{0.02em}3.0 & 
28.1\hspace{0.02em}$\pm$\hspace{0.02em}0.0 & 
42.4\hspace{0.02em}$\pm$\hspace{0.02em}0.9 & 
73.4\hspace{0.02em}$\pm$\hspace{0.02em}0.7 & 
86.5\hspace{0.02em}$\pm$\hspace{0.02em}0.7 & 
91.2\hspace{0.02em}$\pm$\hspace{0.02em}0.0 & 
93.1\hspace{0.02em}$\pm$\hspace{0.02em}0.4 & 
\textbf{93.6\hspace{0.02em}$\pm$\hspace{0.02em}0.2} & 
94.1\hspace{0.02em}$\pm$\hspace{0.02em}0.2 &
94.3\hspace{0.02em}$\pm$\hspace{0.02em}0.0 \\ \cmidrule(lr){1-13}

CAL     & 
12.9\hspace{0.02em}$\pm$\hspace{0.02em}2.0 & 
15.8\hspace{0.02em}$\pm$\hspace{0.02em}4.1 & 
19.2\hspace{0.02em}$\pm$\hspace{0.02em}4.5 & 
63.0\hspace{0.02em}$\pm$\hspace{0.02em}1.4 & 
72.2\hspace{0.02em}$\pm$\hspace{0.02em}0.2 & 
78.1\hspace{0.02em}$\pm$\hspace{0.02em}0.9 & 
82.6\hspace{0.02em}$\pm$\hspace{0.02em}0.5 & 
85.0\hspace{0.02em}$\pm$\hspace{0.02em}0.2 & 
87.6\hspace{0.02em}$\pm$\hspace{0.02em}0.7 & 
89.6\hspace{0.02em}$\pm$\hspace{0.02em}0.4 &
93.0\hspace{0.02em}$\pm$\hspace{0.02em}0.2 &
94.3\hspace{0.02em}$\pm$\hspace{0.02em}0.0 \\
DeepFool   & 
11.6\hspace{0.02em}$\pm$\hspace{0.02em}0.7 & 
14.6\hspace{0.02em}$\pm$\hspace{0.02em}1.3 & 
15.6\hspace{0.02em}$\pm$\hspace{0.02em}4.1 & 
47.3\hspace{0.02em}$\pm$\hspace{0.02em}6.8 & 
71.7\hspace{0.02em}$\pm$\hspace{0.02em}1.1 & 
84.2\hspace{0.02em}$\pm$\hspace{0.02em}0.4 & 
\textbf{90.0\hspace{0.02em}$\pm$\hspace{0.02em}0.2} & 
\textbf{91.6\hspace{0.02em}$\pm$\hspace{0.02em}0.1 }& 
92.4\hspace{0.02em}$\pm$\hspace{0.02em}0.2 & 
93.2\hspace{0.02em}$\pm$\hspace{0.02em}0.2 & 
94.1\hspace{0.02em}$\pm$\hspace{0.02em}0.1 &
94.3\hspace{0.02em}$\pm$\hspace{0.02em}0.0 \\ \cmidrule(lr){1-13}

Craig   & 
13.3\hspace{0.02em}$\pm$\hspace{0.02em}1.4 & 
14.2\hspace{0.02em}$\pm$\hspace{0.02em}1.4 & 
13.6\hspace{0.02em}$\pm$\hspace{0.02em}0.0 & 
46.6\hspace{0.02em}$\pm$\hspace{0.02em}0.0 & 
67.3\hspace{0.02em}$\pm$\hspace{0.02em}3.1 & 
82.4\hspace{0.02em}$\pm$\hspace{0.02em}0.6 & 
88.8\hspace{0.02em}$\pm$\hspace{0.02em}0.3 & 
91.4\hspace{0.02em}$\pm$\hspace{0.02em}0.0 & 
92.7\hspace{0.02em}$\pm$\hspace{0.02em}0.2 & 
93.4\hspace{0.02em}$\pm$\hspace{0.02em}0.1 & 
\textbf{94.2\hspace{0.02em}$\pm$\hspace{0.02em}0.1} &
94.3\hspace{0.02em}$\pm$\hspace{0.02em}0.0 \\
GradMatch  & 
13.4\hspace{0.02em}$\pm$\hspace{0.02em}2.0 & 
13.0\hspace{0.02em}$\pm$\hspace{0.02em}2.3 & 
13.0\hspace{0.02em}$\pm$\hspace{0.02em}0.0 & 
44.6\hspace{0.02em}$\pm$\hspace{0.02em}3.7 & 
53.9\hspace{0.02em}$\pm$\hspace{0.02em}1.4 & 
77.9\hspace{0.02em}$\pm$\hspace{0.02em}0.4 & 
87.2\hspace{0.02em}$\pm$\hspace{0.02em}0.3 & 
90.2\hspace{0.02em}$\pm$\hspace{0.02em}0.6 & 
\textbf{92.6\hspace{0.02em}$\pm$\hspace{0.02em}0.0} & 
93.4\hspace{0.02em}$\pm$\hspace{0.02em}0.0 & 
\textbf{94.2\hspace{0.02em}$\pm$\hspace{0.02em}0.1} &
94.3\hspace{0.02em}$\pm$\hspace{0.02em}0.0 \\ \cmidrule(lr){1-13}

Glister  & 
\textbf{15.7\hspace{0.02em}$\pm$\hspace{0.02em}0.0} & 
13.9\hspace{0.02em}$\pm$\hspace{0.02em}3.0 & 
18.9\hspace{0.02em}$\pm$\hspace{0.02em}5.0 & 
44.0\hspace{0.02em}$\pm$\hspace{0.02em}4.0 & 
53.5\hspace{0.02em}$\pm$\hspace{0.02em}0.9 & 
78.2\hspace{0.02em}$\pm$\hspace{0.02em}2.2 & 
86.4\hspace{0.02em}$\pm$\hspace{0.02em}0.1 & 
89.7\hspace{0.02em}$\pm$\hspace{0.02em}0.0 & 
92.5\hspace{0.02em}$\pm$\hspace{0.02em}0.2 & 
93.3\hspace{0.02em}$\pm$\hspace{0.02em}0.2 & 
\textbf{94.2\hspace{0.02em}$\pm$\hspace{0.02em}0.1} &
94.3\hspace{0.02em}$\pm$\hspace{0.02em}0.0 \\ \cmidrule(lr){1-13}

Facility Location         & 
12.6\hspace{0.02em}$\pm$\hspace{0.02em}1.3 & 
16.8\hspace{0.02em}$\pm$\hspace{0.02em}2.5 & 
15.3\hspace{0.02em}$\pm$\hspace{0.02em}2.5 & 
48.0\hspace{0.02em}$\pm$\hspace{0.02em}6.9 & 
73.4\hspace{0.02em}$\pm$\hspace{0.02em}0.7 & 
85.4\hspace{0.02em}$\pm$\hspace{0.02em}0.3 & 
89.5\hspace{0.02em}$\pm$\hspace{0.02em}0.3 & 
90.6\hspace{0.02em}$\pm$\hspace{0.02em}0.3 & 
92.5\hspace{0.02em}$\pm$\hspace{0.02em}0.2 & 
93.2\hspace{0.02em}$\pm$\hspace{0.02em}0.1 & 
94.1\hspace{0.02em}$\pm$\hspace{0.02em}0.1 &
94.3\hspace{0.02em}$\pm$\hspace{0.02em}0.0 \\

\textbf{SubZeroCore (ours)} &
12.8\hspace{0.02em}$\pm$\hspace{0.02em}0.6 & 
\textbf{18.0\hspace{0.02em}$\pm$\hspace{0.02em}0.0} & 
17.7\hspace{0.02em}$\pm$\hspace{0.02em}3.2 & 
\textbf{65.7\hspace{0.02em}$\pm$\hspace{0.02em}1.7} & 
\textbf{78.1\hspace{0.02em}$\pm$\hspace{0.02em}0.4} & 
\textbf{86.9\hspace{0.02em}$\pm$\hspace{0.02em}0.2} & 
88.8\hspace{0.02em}$\pm$\hspace{0.02em}0.3 & 
90.7\hspace{0.02em}$\pm$\hspace{0.02em}0.1 & 
91.9\hspace{0.02em}$\pm$\hspace{0.02em}0.1 & 
92.2\hspace{0.02em}$\pm$\hspace{0.02em}0.0 & 
93.7\hspace{0.02em}$\pm$\hspace{0.02em}0.1 &
94.3\hspace{0.02em}$\pm$\hspace{0.02em}0.0
\\ \midrule
\end{tabular}
}
}
\end{table*}

\begin{table*}[h!]

  \centering
  \caption{\label{tab:cifar_rn50}Coreset selection performances on CIFAR-10 with five randomly initialized ResNet-50 models. The best results are marked in bold.}
  \resizebox{\linewidth}{!}{
 \footnotesize{
 \setlength{\tabcolsep}{2pt}
\begin{tabular}{ccccccccccccc}

Fraction $(1-\alpha)$ & 0.1\%       & 0.5\%       & 1\%       & 5\%       & 10\%      & 20\%      & 30\%      & 40\%      & 50\%      & 60\%      & 90\%      & 100\%        \\ \cmidrule(lr){1-13}
Herding  & 
16.2\hspace{0.02em}$\pm$\hspace{0.02em}2.4 & 
23.0\hspace{0.02em}$\pm$\hspace{0.02em}2.4 & 
29.5\hspace{0.02em}$\pm$\hspace{0.02em}2.8 & 
46.3\hspace{0.02em}$\pm$\hspace{0.02em}2.8 & 
58.5\hspace{0.02em}$\pm$\hspace{0.02em}5.2 & 
73.1\hspace{0.02em}$\pm$\hspace{0.02em}2.1 & 
80.0\hspace{0.02em}$\pm$\hspace{0.02em}0.9 & 
84.1\hspace{0.02em}$\pm$\hspace{0.02em}0.5 & 
86.4\hspace{0.02em}$\pm$\hspace{0.02em}0.9 & 
89.6\hspace{0.02em}$\pm$\hspace{0.02em}0.9 & 
94.6\hspace{0.02em}$\pm$\hspace{0.02em}0.3 &
95.4\hspace{0.02em}$\pm$\hspace{0.02em}0.3 \\

k-Center Greedy & 
15.4\hspace{0.02em}$\pm$\hspace{0.02em}1.8 & 
18.8\hspace{0.02em}$\pm$\hspace{0.02em}3.1 & 
25.0\hspace{0.02em}$\pm$\hspace{0.02em}4.6 & 
53.1\hspace{0.02em}$\pm$\hspace{0.02em}1.9 & 
68.8\hspace{0.02em}$\pm$\hspace{0.02em}2.1 & 
86.6\hspace{0.02em}$\pm$\hspace{0.02em}0.4 & 
90.2\hspace{0.02em}$\pm$\hspace{0.02em}0.6 & 
91.7\hspace{0.02em}$\pm$\hspace{0.02em}0.2 & 
93.4\hspace{0.02em}$\pm$\hspace{0.02em}0.3 & 
94.2\hspace{0.02em}$\pm$\hspace{0.02em}0.3 & 
95.3\hspace{0.02em}$\pm$\hspace{0.02em}0.3 &
95.4\hspace{0.02em}$\pm$\hspace{0.02em}0.3 \\ \cmidrule(lr){1-13}

Forgetting  & 
17.0\hspace{0.02em}$\pm$\hspace{0.02em}2.2 & 
26.9\hspace{0.02em}$\pm$\hspace{0.02em}2.7 & 
34.0\hspace{0.02em}$\pm$\hspace{0.02em}1.0 & 
49.5\hspace{0.02em}$\pm$\hspace{0.02em}2.9 & 
67.5\hspace{0.02em}$\pm$\hspace{0.02em}2.6 & 
86.2\hspace{0.02em}$\pm$\hspace{0.02em}0.7 & 
89.9\hspace{0.02em}$\pm$\hspace{0.02em}0.4 & 
92.0\hspace{0.02em}$\pm$\hspace{0.02em}0.3 & 
92.8\hspace{0.02em}$\pm$\hspace{0.02em}0.2 & 
93.6\hspace{0.02em}$\pm$\hspace{0.02em}0.1 & 
95.2\hspace{0.02em}$\pm$\hspace{0.02em}0.1 &
95.4\hspace{0.02em}$\pm$\hspace{0.02em}0.3 \\
GraNd   & 
15.1\hspace{0.02em}$\pm$\hspace{0.02em}1.7 & 
17.2\hspace{0.02em}$\pm$\hspace{0.02em}1.4 & 
19.7\hspace{0.02em}$\pm$\hspace{0.02em}1.2 & 
23.6\hspace{0.02em}$\pm$\hspace{0.02em}1.6 & 
34.9\hspace{0.02em}$\pm$\hspace{0.02em}2.4 & 
70.3\hspace{0.02em}$\pm$\hspace{0.02em}5.0 & 
85.5\hspace{0.02em}$\pm$\hspace{0.02em}1.4 & 
92.0\hspace{0.02em}$\pm$\hspace{0.02em}0.3 & 
\textbf{94.2\hspace{0.02em}$\pm$\hspace{0.02em}0.4} & 
\textbf{94.9\hspace{0.02em}$\pm$\hspace{0.02em}0.4} & 
\textbf{95.4\hspace{0.02em}$\pm$\hspace{0.02em}0.2} &
95.4\hspace{0.02em}$\pm$\hspace{0.02em}0.3 \\ \cmidrule(lr){1-13}

CAL     & 
19.9\hspace{0.02em}$\pm$\hspace{0.02em}1.3 & 
27.5\hspace{0.02em}$\pm$\hspace{0.02em}0.6 & 
35.9\hspace{0.02em}$\pm$\hspace{0.02em}1.2 & 
56.4\hspace{0.02em}$\pm$\hspace{0.02em}1.4 & 
66.2\hspace{0.02em}$\pm$\hspace{0.02em}1.1 & 
77.6\hspace{0.02em}$\pm$\hspace{0.02em}0.8 & 
83.0\hspace{0.02em}$\pm$\hspace{0.02em}0.4 & 
85.5\hspace{0.02em}$\pm$\hspace{0.02em}0.5 & 
88.2\hspace{0.02em}$\pm$\hspace{0.02em}0.5 & 
90.5\hspace{0.02em}$\pm$\hspace{0.02em}0.6 & 
94.2\hspace{0.02em}$\pm$\hspace{0.02em}0.2 &
95.4\hspace{0.02em}$\pm$\hspace{0.02em}0.3 \\
DeepFool   & 
14.4\hspace{0.02em}$\pm$\hspace{0.02em}2.0 & 
19.8\hspace{0.02em}$\pm$\hspace{0.02em}1.4 & 
26.4\hspace{0.02em}$\pm$\hspace{0.02em}1.1 & 
42.7\hspace{0.02em}$\pm$\hspace{0.02em}2.9 & 
65.7\hspace{0.02em}$\pm$\hspace{0.02em}2.8 & 
86.3\hspace{0.02em}$\pm$\hspace{0.02em}0.9 & 
\textbf{91.1\hspace{0.02em}$\pm$\hspace{0.02em}0.5} & 
\textbf{92.9\hspace{0.02em}$\pm$\hspace{0.02em}0.3} & 
93.9\hspace{0.02em}$\pm$\hspace{0.02em}0.2 & 
94.5\hspace{0.02em}$\pm$\hspace{0.02em}0.2 & 
95.3\hspace{0.02em}$\pm$\hspace{0.02em}0.2 &
95.4\hspace{0.02em}$\pm$\hspace{0.02em}0.3 \\ \cmidrule(lr){1-13}

Craig   & 
17.0\hspace{0.02em}$\pm$\hspace{0.02em}1.1 & 
21.5\hspace{0.02em}$\pm$\hspace{0.02em}1.4 & 
28.1\hspace{0.02em}$\pm$\hspace{0.02em}2.6 & 
42.0\hspace{0.02em}$\pm$\hspace{0.02em}1.5 & 
55.1\hspace{0.02em}$\pm$\hspace{0.02em}2.7 & 
80.8\hspace{0.02em}$\pm$\hspace{0.02em}2.8 & 
89.9\hspace{0.02em}$\pm$\hspace{0.02em}0.7 & 
92.5\hspace{0.02em}$\pm$\hspace{0.02em}0.7 & 
93.5\hspace{0.02em}$\pm$\hspace{0.02em}0.2 & 
94.7\hspace{0.02em}$\pm$\hspace{0.02em}0.1 & 
\textbf{95.4\hspace{0.02em}$\pm$\hspace{0.02em}0.2} &
95.4\hspace{0.02em}$\pm$\hspace{0.02em}0.3 \\
GradMatch  & 
14.3\hspace{0.02em}$\pm$\hspace{0.02em}2.8 & 
21.7\hspace{0.02em}$\pm$\hspace{0.02em}0.9 & 
28.2\hspace{0.02em}$\pm$\hspace{0.02em}0.9 & 
37.4\hspace{0.02em}$\pm$\hspace{0.02em}2.6 & 
50.5\hspace{0.02em}$\pm$\hspace{0.02em}3.5 & 
74.0\hspace{0.02em}$\pm$\hspace{0.02em}2.3 & 
87.6\hspace{0.02em}$\pm$\hspace{0.02em}1.1 & 
92.3\hspace{0.02em}$\pm$\hspace{0.02em}0.4 & 
93.8\hspace{0.02em}$\pm$\hspace{0.02em}0.3 & 
94.7\hspace{0.02em}$\pm$\hspace{0.02em}0.5 & 
95.2\hspace{0.02em}$\pm$\hspace{0.02em}0.2 &
95.4\hspace{0.02em}$\pm$\hspace{0.02em}0.3 \\ \cmidrule(lr){1-13}

Glister  & 
14.9\hspace{0.02em}$\pm$\hspace{0.02em}0.9 & 
20.4\hspace{0.02em}$\pm$\hspace{0.02em}1.5 & 
24.2\hspace{0.02em}$\pm$\hspace{0.02em}3.3 & 
37.0\hspace{0.02em}$\pm$\hspace{0.02em}1.9 & 
50.1\hspace{0.02em}$\pm$\hspace{0.02em}3.0 & 
79.6\hspace{0.02em}$\pm$\hspace{0.02em}2.5 & 
88.6\hspace{0.02em}$\pm$\hspace{0.02em}0.8 & 
92.0\hspace{0.02em}$\pm$\hspace{0.02em}0.3 & 
93.5\hspace{0.02em}$\pm$\hspace{0.02em}0.2 & 
94.5\hspace{0.02em}$\pm$\hspace{0.02em}0.3 & 
95.2\hspace{0.02em}$\pm$\hspace{0.02em}0.2 &
95.4\hspace{0.02em}$\pm$\hspace{0.02em}0.3 \\ \cmidrule(lr){1-13}

Facility Location         & 
16.6\hspace{0.02em}$\pm$\hspace{0.02em}1.4 & 
20.1\hspace{0.02em}$\pm$\hspace{0.02em}4.0 & 
32.9\hspace{0.02em}$\pm$\hspace{0.02em}2.0 & 
55.9\hspace{0.02em}$\pm$\hspace{0.02em}2.4 & 
71.1\hspace{0.02em}$\pm$\hspace{0.02em}3.6 & 
86.7\hspace{0.02em}$\pm$\hspace{0.02em}1.0 & 
90.8\hspace{0.02em}$\pm$\hspace{0.02em}0.4 & 
92.7\hspace{0.02em}$\pm$\hspace{0.02em}0.2 & 
93.7\hspace{0.02em}$\pm$\hspace{0.02em}0.5 & 
94.4\hspace{0.02em}$\pm$\hspace{0.02em}0.3 & 
95.1\hspace{0.02em}$\pm$\hspace{0.02em}0.3 &
95.4\hspace{0.02em}$\pm$\hspace{0.02em}0.3 \\

\textbf{SubZeroCore (ours)} &
\textbf{20.1\hspace{0.02em}$\pm$\hspace{0.02em}0.5} & 
\textbf{27.8\hspace{0.02em}$\pm$\hspace{0.02em}2.0} & 
\textbf{36.5\hspace{0.02em}$\pm$\hspace{0.02em}0.7} & 
\textbf{57.6\hspace{0.02em}$\pm$\hspace{0.02em}3.3} & 
\textbf{74.6\hspace{0.02em}$\pm$\hspace{0.02em}0.9} & 
\textbf{86.8\hspace{0.02em}$\pm$\hspace{0.02em}0.7} & 
90.0\hspace{0.02em}$\pm$\hspace{0.02em}0.2 & 
91.4\hspace{0.02em}$\pm$\hspace{0.02em}0.3 & 
92.8\hspace{0.02em}$\pm$\hspace{0.02em}0.3 & 
93.3\hspace{0.02em}$\pm$\hspace{0.02em}0.3 & 
95.3\hspace{0.02em}$\pm$\hspace{0.02em}0.1 &
95.4\hspace{0.02em}$\pm$\hspace{0.02em}0.3
\\ \midrule
\end{tabular}
}
}
\end{table*}

\begin{table*}[h!]

  \centering
  \caption{\label{tab:cifar_inception}Coreset selection performances on CIFAR-10 with five randomly initialized InceptionNetV3 models. The best results are marked in bold.}
  \resizebox{\linewidth}{!}{
 \footnotesize{
 \setlength{\tabcolsep}{2pt}
\begin{tabular}{ccccccccccccc}

Fraction $(1-\alpha)$ & 0.1\%       & 0.5\%       & 1\%       & 5\%       & 10\%      & 20\%      & 30\%      & 40\%      & 50\%      & 60\%      & 90\%      & 100\%        \\ \cmidrule(lr){1-13}
Herding  & 
14.6\hspace{0.02em}$\pm$\hspace{0.02em}0.7          & 21.8\hspace{0.02em}$\pm$\hspace{0.02em}1.7          & 28.5\hspace{0.02em}$\pm$\hspace{0.02em}1.4          & 42.9\hspace{0.02em}$\pm$\hspace{0.02em}4.0          & 61.1\hspace{0.02em}$\pm$\hspace{0.02em}2.2          & 74.0\hspace{0.02em}$\pm$\hspace{0.02em}1.3          & 80.5\hspace{0.02em}$\pm$\hspace{0.02em}1.0          & 84.7\hspace{0.02em}$\pm$\hspace{0.02em}0.4          & 87.7\hspace{0.02em}$\pm$\hspace{0.02em}0.9          & 90.3\hspace{0.02em}$\pm$\hspace{0.02em}0.8          & 94.8\hspace{0.02em}$\pm$\hspace{0.02em}0.2          & 95.6\hspace{0.02em}$\pm$\hspace{0.02em}0.1 \\

k-Center Greedy & 
15.1\hspace{0.02em}$\pm$\hspace{0.02em}2.0          & 22.1\hspace{0.02em}$\pm$\hspace{0.02em}3.0          & 26.6\hspace{0.02em}$\pm$\hspace{0.02em}1.4          & 51.2\hspace{0.02em}$\pm$\hspace{0.02em}2.3          & 72.9\hspace{0.02em}$\pm$\hspace{0.02em}1.7          & 85.8\hspace{0.02em}$\pm$\hspace{0.02em}0.4          & 89.8\hspace{0.02em}$\pm$\hspace{0.02em}0.4          & 92.2\hspace{0.02em}$\pm$\hspace{0.02em}0.3          & 93.6\hspace{0.02em}$\pm$\hspace{0.02em}0.3          & 94.4\hspace{0.02em}$\pm$\hspace{0.02em}0.4          & 95.4\hspace{0.02em}$\pm$\hspace{0.02em}0.2          & 95.6\hspace{0.02em}$\pm$\hspace{0.02em}0.1 \\ \cmidrule(lr){1-13}

Forgetting  & 
18.5\hspace{0.02em}$\pm$\hspace{0.02em}0.6 & 28.5\hspace{0.02em}$\pm$\hspace{0.02em}1.0 & 32.2\hspace{0.02em}$\pm$\hspace{0.02em}0.9 & 50.9\hspace{0.02em}$\pm$\hspace{0.02em}1.5 & 69.3\hspace{0.02em}$\pm$\hspace{0.02em}1.3 & \textbf{85.4\hspace{0.02em}$\pm$\hspace{0.02em}0.9} & \textbf{90.4\hspace{0.02em}$\pm$\hspace{0.02em}0.4} & 92.4\hspace{0.02em}$\pm$\hspace{0.02em}0.1 & 93.4\hspace{0.02em}$\pm$\hspace{0.02em}0.2 & 94.3\hspace{0.02em}$\pm$\hspace{0.02em}0.3 & 95.4\hspace{0.02em}$\pm$\hspace{0.02em}0.2 & 95.6\hspace{0.02em}$\pm$\hspace{0.02em}0.1 \\
GraNd   & 
14.8\hspace{0.02em}$\pm$\hspace{0.02em}2.3 & 17.4\hspace{0.02em}$\pm$\hspace{0.02em}1.2 & 20.1\hspace{0.02em}$\pm$\hspace{0.02em}0.8 & 27.1\hspace{0.02em}$\pm$\hspace{0.02em}1.1 & 38.1\hspace{0.02em}$\pm$\hspace{0.02em}1.4 & 70.0\hspace{0.02em}$\pm$\hspace{0.02em}2.0 & 86.8\hspace{0.02em}$\pm$\hspace{0.02em}0.6 & \textbf{92.9\hspace{0.02em}$\pm$\hspace{0.02em}0.4} & \textbf{94.4\hspace{0.02em}$\pm$\hspace{0.02em}0.3} & \textbf{95.0\hspace{0.02em}$\pm$\hspace{0.02em}0.3} & 95.5\hspace{0.02em}$\pm$\hspace{0.02em}0.1          & 95.6\hspace{0.02em}$\pm$\hspace{0.02em}0.1 \\ \cmidrule(lr){1-13}

CAL      & 
17.9\hspace{0.02em}$\pm$\hspace{0.02em}1.2          & 31.6\hspace{0.02em}$\pm$\hspace{0.02em}1.5          & 36.3\hspace{0.02em}$\pm$\hspace{0.02em}1.1          & 60.2\hspace{0.02em}$\pm$\hspace{0.02em}2.0          & 70.6\hspace{0.02em}$\pm$\hspace{0.02em}0.6          & 78.7\hspace{0.02em}$\pm$\hspace{0.02em}0.7          & 83.1\hspace{0.02em}$\pm$\hspace{0.02em}0.5          & 86.3\hspace{0.02em}$\pm$\hspace{0.02em}0.8          & 88.9\hspace{0.02em}$\pm$\hspace{0.02em}0.4          & 90.3\hspace{0.02em}$\pm$\hspace{0.02em}0.3          & 94.3\hspace{0.02em}$\pm$\hspace{0.02em}0.1          & 95.6\hspace{0.02em}$\pm$\hspace{0.02em}0.1 \\
DeepFool    & 
14.5\hspace{0.02em}$\pm$\hspace{0.02em}1.6          & 22.3\hspace{0.02em}$\pm$\hspace{0.02em}1.1          & 26.8\hspace{0.02em}$\pm$\hspace{0.02em}1.6          & 47.7\hspace{0.02em}$\pm$\hspace{0.02em}3.0          & 70.2\hspace{0.02em}$\pm$\hspace{0.02em}2.1          & 83.6\hspace{0.02em}$\pm$\hspace{0.02em}1.2          & 90.2\hspace{0.02em}$\pm$\hspace{0.02em}0.4          & 92.8\hspace{0.02em}$\pm$\hspace{0.02em}0.2          & 94.0\hspace{0.02em}$\pm$\hspace{0.02em}0.2          & 94.6\hspace{0.02em}$\pm$\hspace{0.02em}0.3          & \textbf{95.6\hspace{0.02em}$\pm$\hspace{0.02em}0.2}          & 95.6\hspace{0.02em}$\pm$\hspace{0.02em}0.1 \\ \cmidrule(lr){1-13}

Craig   & 
16.5\hspace{0.02em}$\pm$\hspace{0.02em}2.5          & 24.7\hspace{0.02em}$\pm$\hspace{0.02em}1.3          & 27.8\hspace{0.02em}$\pm$\hspace{0.02em}2.0          & 44.5\hspace{0.02em}$\pm$\hspace{0.02em}1.8          & 58.8\hspace{0.02em}$\pm$\hspace{0.02em}2.5          & 79.5\hspace{0.02em}$\pm$\hspace{0.02em}1.4          & 88.0\hspace{0.02em}$\pm$\hspace{0.02em}1.1          & 92.3\hspace{0.02em}$\pm$\hspace{0.02em}0.2          & 94.0\hspace{0.02em}$\pm$\hspace{0.02em}0.2          & \textbf{95.0\hspace{0.02em}$\pm$\hspace{0.02em}0.2}          & \textbf{95.6\hspace{0.02em}$\pm$\hspace{0.02em}0.1}          & 95.6\hspace{0.02em}$\pm$\hspace{0.02em}0.1 \\
GradMatch  & 
16.5\hspace{0.02em}$\pm$\hspace{0.02em}1.2          & 24.2\hspace{0.02em}$\pm$\hspace{0.02em}0.8          & 29.1\hspace{0.02em}$\pm$\hspace{0.02em}1.0          & 40.1\hspace{0.02em}$\pm$\hspace{0.02em}2.7          & 53.1\hspace{0.02em}$\pm$\hspace{0.02em}2.2          & 76.9\hspace{0.02em}$\pm$\hspace{0.02em}2.6          & 87.1\hspace{0.02em}$\pm$\hspace{0.02em}0.3          & 91.8\hspace{0.02em}$\pm$\hspace{0.02em}0.7          & 94.2\hspace{0.02em}$\pm$\hspace{0.02em}0.5          & 94.9\hspace{0.02em}$\pm$\hspace{0.02em}0.3          & 95.4\hspace{0.02em}$\pm$\hspace{0.02em}0.2          & 95.6\hspace{0.02em}$\pm$\hspace{0.02em}0.1 \\ \cmidrule(lr){1-13}

Glister  & 
14.8\hspace{0.02em}$\pm$\hspace{0.02em}1.5          & 22.9\hspace{0.02em}$\pm$\hspace{0.02em}1.8          & 29.3\hspace{0.02em}$\pm$\hspace{0.02em}1.8          & 41.3\hspace{0.02em}$\pm$\hspace{0.02em}1.8          & 52.4\hspace{0.02em}$\pm$\hspace{0.02em}3.1          & 77.5\hspace{0.02em}$\pm$\hspace{0.02em}1.7          & 87.8\hspace{0.02em}$\pm$\hspace{0.02em}0.7          & 91.8\hspace{0.02em}$\pm$\hspace{0.02em}0.4          & 94.0\hspace{0.02em}$\pm$\hspace{0.02em}0.1          & 94.9\hspace{0.02em}$\pm$\hspace{0.02em}0.1          & \textbf{95.6\hspace{0.02em}$\pm$\hspace{0.02em}0.1} & 95.6\hspace{0.02em}$\pm$\hspace{0.02em}0.1 \\ \cmidrule(lr){1-13}

Facility Location         & 
\textbf{18.7\hspace{0.02em}$\pm$\hspace{0.02em}2.7}          & 26.4\hspace{0.02em}$\pm$\hspace{0.02em}2.5          & 34.4\hspace{0.02em}$\pm$\hspace{0.02em}1.8          & 59.7\hspace{0.02em}$\pm$\hspace{0.02em}2.5          & 73.1\hspace{0.02em}$\pm$\hspace{0.02em}2.1          & 85.2\hspace{0.02em}$\pm$\hspace{0.02em}0.9          & 89.8\hspace{0.02em}$\pm$\hspace{0.02em}0.1 & 92.8\hspace{0.02em}$\pm$\hspace{0.02em}0.2          & 93.9\hspace{0.02em}$\pm$\hspace{0.02em}0.2          & 94.8\hspace{0.02em}$\pm$\hspace{0.02em}0.2          & 95.5\hspace{0.02em}$\pm$\hspace{0.02em}0.2          & 95.6\hspace{0.02em}$\pm$\hspace{0.02em}0.1 \\

\textbf{SubZeroCore (ours)} &
\textbf{18.7\hspace{0.02em}$\pm$\hspace{0.02em}1.6} &
\textbf{30.7\hspace{0.02em}$\pm$\hspace{0.02em}0.8} &
\textbf{36.7\hspace{0.02em}$\pm$\hspace{0.02em}1.5} &
\textbf{63.3\hspace{0.02em}$\pm$\hspace{0.02em}0.4} &
\textbf{76.2\hspace{0.02em}$\pm$\hspace{0.02em}1.2} &
\textbf{85.4\hspace{0.02em}$\pm$\hspace{0.02em}0.4} &
89.4\hspace{0.02em}$\pm$\hspace{0.02em}0.5 &
91.7\hspace{0.02em}$\pm$\hspace{0.02em}0.2 &
92.7\hspace{0.02em}$\pm$\hspace{0.02em}0.1 &
93.7\hspace{0.02em}$\pm$\hspace{0.02em}0.1 &
95.3\hspace{0.02em}$\pm$\hspace{0.02em}0.1 &
95.6\hspace{0.02em}$\pm$\hspace{0.02em}0.1
\\ \midrule
\end{tabular}
}
}
\end{table*}

\begin{table*}[h!]

  \centering
  \caption{\label{tab:cifar_wrn}Coreset selection performances on CIFAR-10 with five randomly initialized WRN-16-8 models. The best results are marked in bold.}
  \resizebox{\linewidth}{!}{
 \footnotesize{
 \setlength{\tabcolsep}{2pt}
\begin{tabular}{ccccccccccccc}

Fraction $(1-\alpha)$ & 0.1\%       & 0.5\%       & 1\%       & 5\%       & 10\%      & 20\%      & 30\%      & 40\%      & 50\%      & 60\%      & 90\%      & 100\%        \\ \cmidrule(lr){1-13}
Herding  & 
23.8\hspace{0.02em}$\pm$\hspace{0.02em}1.4 & 
31.3\hspace{0.02em}$\pm$\hspace{0.02em}3.5 & 
39.6\hspace{0.02em}$\pm$\hspace{0.02em}2.8 & 
60.2\hspace{0.02em}$\pm$\hspace{0.02em}1.2 & 
66.9\hspace{0.02em}$\pm$\hspace{0.02em}2.3 & 
77.3\hspace{0.02em}$\pm$\hspace{0.02em}1.0 & 
81.9\hspace{0.02em}$\pm$\hspace{0.02em}0.7 & 
86.4\hspace{0.02em}$\pm$\hspace{0.02em}1.3 & 
89.0\hspace{0.02em}$\pm$\hspace{0.02em}0.6 & 
92.3\hspace{0.02em}$\pm$\hspace{0.02em}1.0 & 
95.2\hspace{0.02em}$\pm$\hspace{0.02em}0.1 &
96.0\hspace{0.02em}$\pm$\hspace{0.02em}0.1 \\

k-Center Greedy & 
19.2\hspace{0.02em}$\pm$\hspace{0.02em}0.6 & 
27.7\hspace{0.02em}$\pm$\hspace{0.02em}0.6 & 
34.6\hspace{0.02em}$\pm$\hspace{0.02em}0.2 & 
67.9\hspace{0.02em}$\pm$\hspace{0.02em}0.8 & 
81.6\hspace{0.02em}$\pm$\hspace{0.02em}0.5 & 
89.4\hspace{0.02em}$\pm$\hspace{0.02em}0.2 & 
92.1\hspace{0.02em}$\pm$\hspace{0.02em}0.3 & 
93.6\hspace{0.02em}$\pm$\hspace{0.02em}0.2 & 
94.3\hspace{0.02em}$\pm$\hspace{0.02em}0.1 & 
94.9\hspace{0.02em}$\pm$\hspace{0.02em}0.1 & 
95.9\hspace{0.02em}$\pm$\hspace{0.02em}0.1 &
96.0\hspace{0.02em}$\pm$\hspace{0.02em}0.1 \\ \cmidrule(lr){1-13}

Forgetting  & 
21.5\hspace{0.02em}$\pm$\hspace{0.02em}1.0 & 
30.1\hspace{0.02em}$\pm$\hspace{0.02em}0.9 & 
36.3\hspace{0.02em}$\pm$\hspace{0.02em}0.6 & 
58.5\hspace{0.02em}$\pm$\hspace{0.02em}1.0 & 
74.9\hspace{0.02em}$\pm$\hspace{0.02em}0.8 & 
88.8\hspace{0.02em}$\pm$\hspace{0.02em}0.5 & 
\textbf{93.3\hspace{0.02em}$\pm$\hspace{0.02em}0.1} & 
\textbf{94.5\hspace{0.02em}$\pm$\hspace{0.02em}0.1} & 
95.0\hspace{0.02em}$\pm$\hspace{0.02em}0.1 & 
95.4\hspace{0.02em}$\pm$\hspace{0.02em}0.1 & 
95.9\hspace{0.02em}$\pm$\hspace{0.02em}0.1 &
96.0\hspace{0.02em}$\pm$\hspace{0.02em}0.1 \\
GraNd   & 
15.0\hspace{0.02em}$\pm$\hspace{0.02em}1.8 & 
19.5\hspace{0.02em}$\pm$\hspace{0.02em}0.7 & 
21.4\hspace{0.02em}$\pm$\hspace{0.02em}0.3 & 
37.8\hspace{0.02em}$\pm$\hspace{0.02em}1.0 & 
58.7\hspace{0.02em}$\pm$\hspace{0.02em}1.0 & 
81.8\hspace{0.02em}$\pm$\hspace{0.02em}0.7 & 
92.1\hspace{0.02em}$\pm$\hspace{0.02em}0.1 & 
94.3\hspace{0.02em}$\pm$\hspace{0.02em}0.2 & 
\textbf{95.3\hspace{0.02em}$\pm$\hspace{0.02em}0.1} & 
\textbf{95.7\hspace{0.02em}$\pm$\hspace{0.02em}0.2} & 
\textbf{96.0\hspace{0.02em}$\pm$\hspace{0.02em}0.2} &
96.0\hspace{0.02em}$\pm$\hspace{0.02em}0.1 \\ \cmidrule(lr){1-13}

CAL     & 
22.2\hspace{0.02em}$\pm$\hspace{0.02em}2.4 & 
37.1\hspace{0.02em}$\pm$\hspace{0.02em}2.5 & 
45.5\hspace{0.02em}$\pm$\hspace{0.02em}1.4 & 
66.2\hspace{0.02em}$\pm$\hspace{0.02em}0.6 & 
74.2\hspace{0.02em}$\pm$\hspace{0.02em}0.9 & 
82.7\hspace{0.02em}$\pm$\hspace{0.02em}0.5 & 
86.3\hspace{0.02em}$\pm$\hspace{0.02em}0.8 & 
89.2\hspace{0.02em}$\pm$\hspace{0.02em}0.6 & 
91.0\hspace{0.02em}$\pm$\hspace{0.02em}0.3 & 
92.4\hspace{0.02em}$\pm$\hspace{0.02em}0.2 & 
95.3\hspace{0.02em}$\pm$\hspace{0.02em}0.1 &
96.0\hspace{0.02em}$\pm$\hspace{0.02em}0.1 \\
DeepFool   & 
18.9\hspace{0.02em}$\pm$\hspace{0.02em}1.9 & 
29.2\hspace{0.02em}$\pm$\hspace{0.02em}1.0 & 
35.0\hspace{0.02em}$\pm$\hspace{0.02em}1.8 & 
57.4\hspace{0.02em}$\pm$\hspace{0.02em}3.2 & 
74.0\hspace{0.02em}$\pm$\hspace{0.02em}1.8 & 
87.5\hspace{0.02em}$\pm$\hspace{0.02em}0.4 & 
92.0\hspace{0.02em}$\pm$\hspace{0.02em}0.3 & 
93.5\hspace{0.02em}$\pm$\hspace{0.02em}0.2 & 
94.6\hspace{0.02em}$\pm$\hspace{0.02em}0.1 & 
95.2\hspace{0.02em}$\pm$\hspace{0.02em}0.1 & 
\textbf{96.0\hspace{0.02em}$\pm$\hspace{0.02em}0.1} &
96.0\hspace{0.02em}$\pm$\hspace{0.02em}0.1 \\ \cmidrule(lr){1-13}

Craig   & 
21.9\hspace{0.02em}$\pm$\hspace{0.02em}1.5 & 
30.0\hspace{0.02em}$\pm$\hspace{0.02em}0.8 & 
38.1\hspace{0.02em}$\pm$\hspace{0.02em}1.4 & 
60.1\hspace{0.02em}$\pm$\hspace{0.02em}1.4 & 
69.2\hspace{0.02em}$\pm$\hspace{0.02em}0.8 & 
86.5\hspace{0.02em}$\pm$\hspace{0.02em}0.4 & 
91.4\hspace{0.02em}$\pm$\hspace{0.02em}0.2 & 
93.8\hspace{0.02em}$\pm$\hspace{0.02em}0.1 & 
94.8\hspace{0.02em}$\pm$\hspace{0.02em}0.1 & 
95.4\hspace{0.02em}$\pm$\hspace{0.02em}0.1 & 
95.9\hspace{0.02em}$\pm$\hspace{0.02em}0.1 &
96.0\hspace{0.02em}$\pm$\hspace{0.02em}0.1 \\
GradMatch  & 
20.1\hspace{0.02em}$\pm$\hspace{0.02em}1.8 & 
27.8\hspace{0.02em}$\pm$\hspace{0.02em}1.2 & 
31.4\hspace{0.02em}$\pm$\hspace{0.02em}2.1 & 
54.4\hspace{0.02em}$\pm$\hspace{0.02em}2.6 & 
70.5\hspace{0.02em}$\pm$\hspace{0.02em}2.0 & 
84.6\hspace{0.02em}$\pm$\hspace{0.02em}0.7 & 
90.7\hspace{0.02em}$\pm$\hspace{0.02em}0.5 & 
93.3\hspace{0.02em}$\pm$\hspace{0.02em}0.3 & 
94.7\hspace{0.02em}$\pm$\hspace{0.02em}0.4 & 
95.3\hspace{0.02em}$\pm$\hspace{0.02em}0.1 & 
\textbf{96.0\hspace{0.02em}$\pm$\hspace{0.02em}0.1} &
96.0\hspace{0.02em}$\pm$\hspace{0.02em}0.1 \\ \cmidrule(lr){1-13}

Glister  & 
20.1\hspace{0.02em}$\pm$\hspace{0.02em}1.2 & 
28.0\hspace{0.02em}$\pm$\hspace{0.02em}1.4 & 
31.6\hspace{0.02em}$\pm$\hspace{0.02em}0.8 & 
49.3\hspace{0.02em}$\pm$\hspace{0.02em}3.3 & 
67.9\hspace{0.02em}$\pm$\hspace{0.02em}1.7 & 
83.6\hspace{0.02em}$\pm$\hspace{0.02em}0.8 & 
90.5\hspace{0.02em}$\pm$\hspace{0.02em}0.9 & 
93.6\hspace{0.02em}$\pm$\hspace{0.02em}0.1 & 
94.6\hspace{0.02em}$\pm$\hspace{0.02em}0.1 & 
95.2\hspace{0.02em}$\pm$\hspace{0.02em}0.2 & 
\textbf{96.0\hspace{0.02em}$\pm$\hspace{0.02em}0.2} &
96.0\hspace{0.02em}$\pm$\hspace{0.02em}0.1 \\ \cmidrule(lr){1-13}

Facility Location         & 
25.8\hspace{0.02em}$\pm$\hspace{0.02em}1.8 & 
37.4\hspace{0.02em}$\pm$\hspace{0.02em}0.3 & 
46.8\hspace{0.02em}$\pm$\hspace{0.02em}1.0 & 
75.1\hspace{0.02em}$\pm$\hspace{0.02em}0.6 & 
82.1\hspace{0.02em}$\pm$\hspace{0.02em}0.5 & 
89.0\hspace{0.02em}$\pm$\hspace{0.02em}0.1 & 
92.4\hspace{0.02em}$\pm$\hspace{0.02em}0.2 & 
93.7\hspace{0.02em}$\pm$\hspace{0.02em}0.1 & 
94.7\hspace{0.02em}$\pm$\hspace{0.02em}0.1 & 
95.1\hspace{0.02em}$\pm$\hspace{0.02em}0.1 & 
95.9\hspace{0.02em}$\pm$\hspace{0.02em}0.1 &
96.0\hspace{0.02em}$\pm$\hspace{0.02em}0.1 \\

\textbf{SubZeroCore (ours)} &
\textbf{26.6\hspace{0.02em}$\pm$\hspace{0.02em}0.6} & 
\textbf{38.1\hspace{0.02em}$\pm$\hspace{0.02em}0.2} & 
\textbf{47.9\hspace{0.02em}$\pm$\hspace{0.02em}0.5} & 
\textbf{75.6\hspace{0.02em}$\pm$\hspace{0.02em}0.4} & 
\textbf{83.1\hspace{0.02em}$\pm$\hspace{0.02em}0.3} & 
\textbf{89.5\hspace{0.02em}$\pm$\hspace{0.02em}0.2} & 
91.8\hspace{0.02em}$\pm$\hspace{0.02em}0.1 & 
93.2\hspace{0.02em}$\pm$\hspace{0.02em}0.1 & 
93.9\hspace{0.02em}$\pm$\hspace{0.02em}0.1 & 
94.5\hspace{0.02em}$\pm$\hspace{0.02em}0.1 & 
95.9\hspace{0.02em}$\pm$\hspace{0.02em}0.1 &
96.0\hspace{0.02em}$\pm$\hspace{0.02em}0.1
\\ \midrule
\end{tabular}
}
}
\end{table*}

\section{SubZeroCore is submodular}
We referred to \textit{Berczi et al.} and omitted the proof in the main paper. For completeness, the proof can be done like the following.
\begin{corollary}
    The SubZeroCore function $f_{\text{SubZeroCore}}$ is submodular.
\end{corollary}
\begin{proof}
  Let $A \subseteq B \subseteq \mathcal{T}$ and pick any $k \in \mathcal{T}\setminus B$.  For each $i\in\mathcal{T}$, define
  \[
    \Delta_i(k \mid A)
    = \max\bigl(s_k\cdot\operatorname{sim}(\mathbf{x}_i,\mathbf{x}_k),\,\max_{j\in A} s_j\cdot\operatorname{sim}(\mathbf{x}_i,\mathbf{x}_j)\bigr)
      - \max_{j\in A} s_j\cdot\operatorname{sim}(\mathbf{x}_i,\mathbf{x}_j).
  \]
  Then the total marginal gain is
  \[
    f(A\cup\{k\}) - f(A)
    = \sum_{i\in\mathcal{T}} \Delta_i(k \mid A),
    \quad
    f(B\cup\{k\}) - f(B)
    = \sum_{i\in\mathcal{T}} \Delta_i(k \mid B).
  \]
  Since $\max_{j\in A}s_j\!\cdot\!\operatorname{sim}(\mathbf{x}_i,\mathbf{x}_j)\le \max_{j\in B}s_j\!\cdot\!\operatorname{sim}(\mathbf{x}_i,\mathbf{x}_j)$,
  it follows for each $i$ that 
  $\Delta_i(k \mid A)\ge \Delta_i(k \mid B)$.  Summing over $i$ gives
  \[
    f(A\cup\{k\}) - f(A)
    \;\ge\;
    f(B\cup\{k\}) - f(B),
  \]
  which is exactly the diminishing-returns condition for submodularity.
\end{proof}

\end{document}